\documentclass[lettersize,journal]{IEEEtran}
\usepackage{amsmath,amssymb,amsfonts}
\usepackage{amsthm}
\usepackage{bm}
\usepackage{algorithmic}
\usepackage{algorithm}
\usepackage{array}
\usepackage[caption=false,font=normalsize,labelfont=sf,textfont=sf]{subfig}
\usepackage{textcomp}
\usepackage{stfloats}
\usepackage{url}
\usepackage{booktabs}
\usepackage{verbatim}
\usepackage{graphicx}
\usepackage{tabularx}
\usepackage{cite}
\usepackage[colorlinks,
            linkcolor=blue,       
            anchorcolor=blue,  
            citecolor=blue,        
            ]{hyperref}
\theoremstyle{definition}
\newtheorem{assumption}{Assumption}

\newtheorem{theorem}{Theorem}
\newtheorem{lemma}{Lemma}
\theoremstyle{remark}
\newtheorem*{remark}{Remark}

\usepackage{graphicx}
\usepackage{multirow}
\hyphenation{op-tical net-works semi-conduc-tor IEEE-Xplore}
\allowdisplaybreaks[4]

\begin{document}

\title{Multi-Agent Conditional Diffusion Model with Mean Field Communication as Wireless Resource Allocation Planner}

\author{Kechen Meng, Sinuo Zhang, Rongpeng Li, Xiangming Meng, Yansha Deng, Chan Wang, Ming Lei, and Zhifeng Zhao
    \thanks{K. Meng, S. Zhang, R. Li, C. Wang, and M. Lei are with the College of Information Science and Electronic Engineering, Zhejiang University (email: \{mengkechen, 22431100, lirongpeng, 0617464, lm1029\}@zju.edu.cn). X. Meng is with the Zhejiang University-University of Illinois Urbana-Champaign (ZJU-UIUC) Institute, Zhejiang University (e-mail: xiangmingmeng@intl.zju.edu.cn). Y. Deng is with the Department of Engineering, King’s College London, London WC2R 2LS, U.K. (e-mail: {yansha.deng}@kcl.ac.uk). Z. Zhao is with Zhejiang Lab as well as the College of Information Science and Electronic Engineering, Zhejiang University (email: zhaozf@zhejianglab.org).}
}



\maketitle

\begin{abstract}
In wireless communication systems, efficient and adaptive resource allocation plays a crucial role in enhancing overall Quality of Service (QoS).
Compared to the conventional Model-Free Reinforcement Learning (MFRL) scheme, Model-Based RL (MBRL) first learns a generative world model for subsequent planning. The reuse of historical experience in MBRL promises more stable training behavior, yet its deployment in large-scale wireless networks remains challenging due to high-dimensional stochastic dynamics, strong inter-agent cooperation, and communication constraints. 
To overcome these challenges, we propose the Multi-Agent Conditional Diffusion Model Planner (MA-CDMP) for decentralized communication resource management. Built upon the Distributed Training with Decentralized Execution (DTDE) paradigm, MA-CDMP models each communication node as an autonomous agent and employs Diffusion Models (DMs) to capture and predict environment dynamics. Meanwhile, an inverse dynamics model guides action generation, thereby enhancing sample efficiency and policy scalability.
Moreover, to approximate large-scale agent interactions, a Mean-Field (MF) mechanism is introduced as an assistance to the classifier in DMs. This design mitigates inter-agent non-stationarity and enhances cooperation with minimal communication overhead in distributed settings. We further theoretically establish an upper bound on the distributional approximation error introduced by the MF-based diffusion generation, guaranteeing convergence stability and reliable modeling of multi-agent stochastic dynamics. Extensive experiments demonstrate that MA‑CDMP consistently outperforms existing MARL baselines in terms of average reward and QoS metrics, showcasing its scalability and practicality for real-world wireless network optimization.
\end{abstract}

\begin{IEEEkeywords}
Wireless communication networks, resource allocation, conditional diffusion model, mean field communication, distribution approximation error analysis of diffusion models.
\end{IEEEkeywords}

\section{Introduction}

\IEEEPARstart{W}{irelsess} communication networks have been extensively deployed due to their flexibility, low cost, and robustness\cite{alsabah20216g, kadhim2023enhancement}. Within these networks, efficient resource allocation is vital for optimizing spectrum and energy utilization\cite{aboagye2024multi}, yet increasing network complexity and localized observations limit system stability and scalability. Traditional rule-based schemes \cite{7148429, jabandzic2021dynamic} often fail under dynamic or high-load conditions, motivating the use of Reinforcement Learning (RL) for adaptive resource management \cite{wi2020delay, Chilukuri2021Deadline}. Nonetheless, widely adopted Model-Free RL (MFRL) methods might suffer from unstable training and low sample efficiency, due to their reliance on costly online interactions with the wireless environment. To address this, we adopt the Model-Based RL (MBRL) paradigm\cite{moerland2023model}, where a world model is learned from historical logs to simulate potential outcomes and generate synthetic trajectories, enabling planning-based decision-making with improved long-term reasoning\cite{huang2020model,you2019advanced}. 

Despite earlier efforts like model ensembling \cite{chua2018deep}, advanced encoding \cite{kingma2013auto} and adversarial learning \cite{luo2023reward,goodfellow2014generative}, building an accurate generative world model in complex and dynamic wireless networks remains a major challenge \cite{wu_highfidelity_2025}. 
Fortunately, compared with Energy-Based Models (EBMs)~\cite{du2019implicit}, Variational Autoencoders (VAEs)~\cite{kingma2013auto}, and Generative Adversarial Networks (GANs)~\cite{goodfellow2014generative}, Diffusion Models (DMs) demonstrate superior sample quality and training stability through an iterative noise-to-data denoising process, representing the forefront of advances in generative learning \cite{brown2020language,rombach2022high}. However, existing works that combine DMs with RL mainly adopt a model-free pipeline \cite{li2023beyond, 10736570}, where DMs generate actions directly. 
Although \cite{meng_conditional_2025a} demonstrates the superiority of diffusion-based MBRL for wireless resource allocation, this framework still operates under a centralized paradigm, leading to inefficient resource utilization due to heterogeneous service demands and the dynamic arrival and departure of services in wireless networks. In response, Multi-Agent RL (MARL) has emerged to further enhance adaptability by enabling distributed decision-making\cite{sohaib2021dynamic, miuccio2024learning}. Existing MARL frameworks primarily follow either the Centralized Training with Decentralized Execution (CTDE) paradigm or the Distributed Training with Decentralized Execution (DTDE) paradigm\cite{hwang2022decentralized}. While CTDE improves cooperation and credit assignment through centralized training, it faces scalability and privacy issues as the number of agents grows \cite{xu2023distributed}. In contrast, DTDE\cite{chen2022sample, li2024multi} learns fully distributed policies with local collaboration, offering better support for large-scale wireless communication networks. 

However, DTDE's decentralized structure limits global information sharing, which can result in non-stationarity in multi-agent learning\cite{hwang2022decentralized}. 
Therefore, the design of communication and fusion mechanisms is
essential yet challenging \cite{qu2022scalable}. A common approach is the consensus-based method\cite{qu2019value, chen2022sample}, which propagates and aggregates information through parameterized functions. While its convergence can be theoretically guaranteed under linear function approximation, it becomes unreliable when extended to neural networks. Another research direction employs Graph Neural Networks (GNNs) to model multi-agent interactions and facilitate cooperation through automated relational learning\cite{blumenkamp2021emergence, nayak2023scalable}. Although GNN-based methods can capture complex dependencies, they often suffer from high computational costs and heavy data requirements. Compared to these consensus-based or graph-based methods, Mean Field (MF) \cite{yang2018mean}, which allows local information exchange through approximating all other neighbors' dynamics, offers both theoretical performance guarantees and computational efficiency.

In this paper, targeted at addressing the distributed resource allocation mission in wireless communication networks, we propose the Multi-Agent Conditional Diffusion Model Planner (MA-CDMP). Following the DTDE framework for MARL training and execution, we model each network node as an independent agent. For each agent, a diffusion noise model is first employed to model environmental dynamics. Subsequently, an inverse dynamics model maps consecutive observation transitions to corresponding actions, facilitating the derivation of a dynamic resource allocation policy. Prominently, to promote cooperation and efficient communication, we integrate the MF mechanism with the classifier in DM to generate trajectories of high cumulative rewards, where the diffusion-based Stochastic Differential Equation (SDE) models each agent’s local observation trajectory together with the averaged observations of its 1-hop neighbors. We theoretically prove that, under MF approximation, the distribution error of diffusion-generated data admits a formal upper bound. It is worth noting that, although related work~\cite{yuan2025integrating} also combines MF and DMs, their design relies on communicating the global density distribution across all agents, whereas our framework focuses on exchanging averaged local observations within 1-hop neighborhoods and leverages a classifier-guided conditional diffusion model for task-specific trajectory generation. Compared with existing literature, the main contributions of this paper can be summarized as follows:
\begin{itemize}
\item We propose the MA-CDMP algorithm under the DTDE framework to achieve load-aware resource allocation in wireless communication networks, with each node modeled as an agent. 
We integrate DMs into the MA-MBRL paradigm for decision generation, leveraging a learned world model to perform conditional planning with improved sample efficiency. Moreover, a MF interaction mechanism is incorporated within the diffusion process to approximate the collective influence of neighboring agents as an aggregated effect, effectively mitigating inter-agent non-stationarity and enhancing coordination in large-scale systems.
\item We conduct a theoretical analysis that derives an upper bound on the distribution error of diffusion-generated data under MF communication. Specifically, we first bound the drift term fitting error in the diffusion SDE through MF approximation error analysis. Then, leveraging relative Fisher information\cite{sun2017relative}, we prove that the distribution approximation error of the SDE-based diffusion generative process also admits an explicit upper bound.
\item We establish formal performance guarantees for the MA-CDMP algorithm, demonstrating its accuracy and reliability under appropriate parameter configurations. The proposed framework is further evaluated through high-fidelity OPNET simulations, demonstrating its superior performance over existing MARL baselines, while extensive ablation studies further validate its robustness, scalability, and adaptability under diverse conditions.
\end{itemize} 

The remainder of the paper is organized as follows. Section~\ref{II} briefly introduces the related works. In Section~\ref{III}, we introduce the preliminaries and formulate the system model. The details of our proposed MA-CDMP algorithm are presented in Section~\ref{IV}, and Section~\ref{V} provides the theoretical analysis establishing its accuracy guarantees. In Section~\ref{VI}, we provide the results of extensive simulations and numerical analysis. Finally, the conclusion is summarized in Section~\ref{VII}.

For convenience, we list the major notations of this paper in Table \ref{tab1}.

\begin{table}[tbp]
\caption{Main parameters and notations used in this paper.}
\vspace{-0.6cm}
\begin{center}
\begin{tabular}{|c|m{6.5cm}|}
\hline
\toprule 
\textbf{Notations} & \textbf{Definition} \\
\midrule 
$N$ & Number of nodes \\
\hline
$M$ & Number of time slots in a frame \\
\hline
$L$ & Number of channels \\
\hline
$P_t$, $P_r$ & The transmit and received power \\
\hline
$G_t$, $G_r$ & The antenna gains at the transmitter and receiver \\
\hline
$f_c$ & The carrier frequency \\
\hline
$d$ & The transmission distance \\
\hline
$K$ & Diffusion steps \\
\hline
$H$ & Planning horizon \\
\hline
$\bm{\epsilon}_{\theta}$ & Diffusion noise model \\
\hline
$\mathcal{J}_{\psi}$ & Classifier model \\
\hline
$f_{\phi}$ & Inverse dynamics model \\
\hline
$\mathcal{N}_i$ & Set of 1-hop neighbors of agent $i$ \\
\hline
$o^{(i)}_t$, $a^{(i)}_t$, $r^{(i)}_{t}$ & Local observation, action and reward of the agent $i$ at $t$ \\
\hline
$\overline{{o}}^{(i)}_t$ & Mean observation over agent $i$’s 1-hop neighbors at $t$ \\
\hline
$\bm{x}^{(i)}_k$ & The observation sequence of agent $i$ at diffusion step $k$ \\
\hline
$\overline{\bm{x}}^{(i)}_k$ & Mean observation sequence over agent $i$’s 1-hop neighbors at diffusion step $k$ \\
\hline
$\widetilde{\bm{x}}^{(i)}_k$ & Concatenated vector of $\bm{x}^{(i)}_k$ and $\overline{\bm{x}}^{(i)}_k$ \\
\hline
$\bm{y}^{(i)}$ & Conditional signal for agent $i$\\
\hline
${\rm gen}_{t}^{(i)}$ & Length of packet generation queue at $t$ \\
\hline
${\rm gen}_{\max,t}^{(i)}$ & Maximum length of packet generation queue within $t$ \\
\hline
$T_{t}^{(i)}$ & Length of packet transmission queue at $t$ \\
\hline
$T_{\max,t}^{(i)}$ & Maximum length of packet transmission queue within $t$ \\
\hline
$d_{t}^{(i)}$ & Time delay of node $i$ at $t$ \\
\hline
$\zeta$ & Conditional guidance scale \\
\hline
$L_{\mathcal{J}}, L_{\bm{\epsilon}}$ & Lipschitz constants of the classifier $\mathcal{J}$ and diffusion noise model $\bm{\epsilon}$ \\
\bottomrule 
\end{tabular}
\label{tab1}
\end{center}
\vspace{-0.5cm}
\end{table}

\section{Related Works}\label{II}
\subsection{Multi-Agent Reinforcement Learning for Resource Scheduling}
With the increasing complexity of wireless communication networks and the scarcity of spectrum resources, efficient scheduling is vital for managing channel access and optimizing network utilization. For instance, \cite{wi2020delay} proposes an actor–critic algorithm to optimize Time Division Multiple Access (TDMA) scheduling and minimize weighted end-to-end delay, while \cite{Chilukuri2021Deadline} encodes node features into state representations and applies heuristics across time slots to support long-term deadline adherence. However, these methods adopt the single-agent RL paradigm, where centralized control is constrained by limited scalability and the challenge of accurately capturing localized interactions. To address these issues, MARL has gained increasing attention. For instance, \cite{9866568} develops a load-aware distributed framework for Multi-Frequency TDMA (MF-TDMA) wireless ad hoc networks, defining a network utility function parameterized by traffic loads. QLBT \cite{9681886} enhances QMIX \cite{rashid2018qmix} with an additional per-agent Q-value in the mixing network and integrates delay-to-last-successful-transmission (D2LT) into observations, enabling cooperative policies that prioritize agents with the longest delays. In \cite{10547350}, the authors address impractical power allocation policies in multi-carrier systems and enforce strict compliance with transmit power constraints. Nevertheless, these approaches follow the CTDE paradigm, which requires centralized information aggregation, leading to substantial communication overhead, privacy concerns, and limited expandability in large-scale deployments. In contrast, our work adopts the DTDE framework with MF communication, enabling scalable cooperation among agents with significantly reduced communication cost.

\subsection{Diffusion-Based Resource Allocation in Communication Systems}
DMs have achieved remarkable success in various domains, including high-fidelity image synthesis\cite{rombach2022high}, natural language processing\cite{brown2020language}, and protein structure prediction\cite{watson2023protein}. Owing to their strong generative capabilities, recent studies have extended DMs to decision-making tasks in RL\cite{janner2022planning, ajay2022conditional}, modeling complex trajectory and action distributions. However, their application in wireless communication networks for resource allocation is still at an early stage. The highly dynamic and complex nature of these networks, coupled with susceptibility to real-world interference, poses significant challenges to the practical deployment of RL algorithms\cite{10515203}. Several emerging works have integrated DMs into wireless communication tasks targeting these difficulties. For example, \cite{10753523} proposes a DDPM-based framework that generates optimal block lengths conditioned on channel states. D3PG \cite{liu2024generative} combines DMs with DDPG \cite{lillicrap2015continuous} to jointly adapt the contention window and aggregation frame length in Wi-Fi networks. Moreover, \cite{10838290} presents a hierarchical coordination scheme, where a DM-enhanced soft actor–critic \cite{haarnoja2018soft} operates at the high level to produce spectrum allocation strategies, while QMIX \cite{rashid2018qmix} functions at the low level for fine-grained power control and resource assignment. Despite promising results, these methods are primarily based on the Model-Free RL (MFRL) paradigm\cite{ramirez2022model}, using DMs to directly parameterize the agents’ policies. In addition, all adopt the single-agent RL setting—where a centralized coordinator controls the entire network—with \cite{10838290} employing a single-agent controller at the upper layer. Such designs cannot be directly applied to our MARL scenario and also suffer from low sample efficiency, unstable convergence, and suboptimal decision quality in dynamic and uncertain wireless environments\cite{huang2020model}. To overcome these drawbacks, we employ an MBRL-based MARL framework, where DMs approximate the environment’s state transition distribution. This modeling strategy enables agents to plan actions without extensive real-world interaction, thereby improving sample efficiency, enhancing convergence stability, and supporting long-horizon planning in complex wireless network scenarios.

\subsection{Diffusion Models for Multi-Agent Reinforcement Learning}
Although DMs have demonstrated strong generative capabilities and been applied to decision-making in RL \cite{janner2022planning, ajay2022conditional}, their use in MARL tasks remains limited. Extending DMs to MARL is challenging due to non-stationarity, training instability, and the complexity of modeling inter-agent interactions\cite{chendeep}. To mitigate these limitations, DoF \cite{li2025dof} extends the Individual-Global-Max (IGM) \cite{rashid2018qmix} principle by introducing a noise factorization function to decompose a centralized diffusion model into agent-specific ones, and a data factorization function to capture dependencies among the generated data. MADiff \cite{zhu2024madiff} employs an attention-based diffusion model to learn complex coordination patterns, training a centralized diffusion process for joint trajectory generation while supporting distributed per-agent trajectory generation for teammate modeling. However, these CTDE-based designs still face scalability issues, as the state and action spaces expand exponentially with the number of agents. Motivated by these challenges, another line of research applies DMs within the DTDE framework. For instance, MA-Diffuser\cite{geng2023diffusion} extends Diffuser\cite{janner2022planning} by learning an independent diffusion planner for each agent, with action-value maximization embedded into the sampling process of a conditional diffusion model. DOM2\cite{li2023beyond} focuses on overcoming the over-conservatism in offline RL by integrating Conservative Q-Learning (CQL)\cite{kumar2020conservative} into Diffusion-QL\cite{wangdiffusion} and introducing a trajectory-based data augmentation strategy to enhance policy diversity and quality. Nevertheless, these independent diffusion models often suffer from poor cooperation among agents. To overcome these limitations, our work adopts the DTDE framework to enhance scalability and integrates MF communication to facilitate inter-agent coordination, enabling efficient and collaborative resource allocation in complex wireless network environments.

\subsection{Theoretical Analysis of Distributional Error in Diffusion-Based Generation}
As a class of Score-based Generative Models (SGMs), DMs provide an expressive and efficient framework for modeling complex distributions. Despite their practical success, the theoretical understanding remains incomplete. In particular, \cite{lee2023convergence} establishes Wasserstein distance guarantees for distributions with bounded support or sufficiently decaying tails under $L^2$-accurate score estimates, and TV bounds under additional smoothness conditions. However, these results rely on strong assumptions such as the Log-Sobolev Inequality (LSI), and obtaining polynomial-time convergence for multi-modal distributions remains an open problem. Building on different methodologies, \cite{chen2023sampling} employs a Girsanov change-of-measure framework \cite{le2016brownian} and analyzes two settings: (i) uniformly Lipschitz-continuous score functions, and (ii) bounded data support. Yet verifying whether the Lipschitz constant scales polynomially with dimension is nontrivial, as it depends on the data distribution’s tail behavior. Further, \cite{chen2023improved} relaxes these assumptions by requiring smoothness only of $\nabla \log p_{0}$ instead of the entire forward process, thereby sidestepping technical challenges of verifying Novikov’s condition \cite{stummer1993novikov}. This yields reverse KL guarantees, along with a pure Wasserstein bound that depends on the data distribution’s tail decay. Drawing inspiration from these methodologies, our work extends the distributional error analysis to conditional diffusion model and, for the first time, incorporates MF approximation into the theoretical analysis of diffusion processes. Specifically, we analyze the convergence guarantee of conditional diffusion model under MF-guided communication, establishing new theoretical bound on distributional error in multi-agent setting.

\section{Preliminaries \& System Model}\label{III}
In this section, we briefly outline the framework of MBRL in multi-agent systems and review the basic principles of DMs. 
On this basis, we formulate the sequential decision-making problem of resource allocation in wireless communication networks under the DTDE paradigm, integrating DM-based planning with MF-based communication.

\subsection{Preliminaries}
\subsubsection{Multi-Agent Model-Based Reinforcement Learning}
We consider a partially observable and fully cooperative multi-agent learning problem, where agents with local observations collaborate to complete the task. This setting is formally described as a Decentralized Partially Observable Markov Decision Process (Dec-POMDP) \cite{oliehoek2016concise}: $G = \langle \mathcal{S}, \mathcal{O}, \mathcal{A}, P, \mathcal{R}, N, U, \gamma \rangle$, where $\mathcal{S}$ and $\mathcal{A}$ denote state and action space respectively, and $\gamma$ is the discounted factor. The system contains $N$ agents operating in discrete time steps, starting from an initial global state $s_0 \in \mathcal{S}$ sampled from the distribution $U$. At each timestep $t$, agent $i \in \{1, 2, \cdots, N \}$ receives a local observation $o_t^{(i)} \in \mathcal{O}^{(i)}$ and selects an action $a_t^{(i)} \in \mathcal{A}^{(i)}$. The joint action $\bm{a}_t = (a_t^{(1)}, \cdots, a_t^{(N)}) \in \mathcal{A}$ drives the system to the next state $s_{t+1}$ according to the dynamics function $P(s_{t+1} | s_t, a_t): \mathcal{S} \times \mathcal{A} \rightarrow \mathcal{S}$, and each agent receives a reward $r_{t}^{(i)} \in \mathcal{R}$. In offline RL, agents do not interact with the environment directly. Instead, they rely on a static dataset $\mathcal{D}$, which consists of trajectories $\mathcal{T}$, i.e., sequences of observations, actions and rewards $\mathcal{T} := \{\{o_0^{(i)}\}_{i=1}^{N},\bm{a}_0,\{r_0^{(i)}\}_{i=1}^{N},\{o_1^{(i)}\}_{i=1}^{N},\bm{a}_1,\{r_1^{(i)}\}_{i=1}^{N}\cdots\}$. The objective is to learn the optimal joint policy $\bm{\pi} = (\pi^{(1)},\cdots,\pi^{(N)})$ from $\mathcal{D}$ so as to maximize the expected cumulative discounted sum of rewards as
\begin{equation}
\bm{\pi}^*:= \mathop{\arg\max}\limits_{\bm{\pi}} \mathbb{E}_{\mathcal{T}\sim p_{\bm{\pi}}}\left[\sum\nolimits_{t\ge0}\sum\nolimits_{i=1}^N\gamma^{t}r_t^{(i)}\right]. 
\label{eq:opt_pi}
\end{equation}
In MBRL, agents first estimate the dynamics model from collected environmental transition samples. Based on this learned model, they predict future outcomes and conduct planning to optimize the policy \cite{janner2019trust}. This reduces the need for costly environment interactions and improves both data efficiency and training stability\cite{huang2020model}.

\subsubsection{Diffusion Models}
DMs are a class of deep generative models that aim to approximate the unknown data distribution $\bm{x}_0 \sim p(\bm{x})$ from the dataset $\mathcal{D}$. Instead of directly generating data, DMs define a forward noising process that gradually perturbs data through a predefined It\^{o} SDE \cite{song2021score} as
\begin{equation}
\label{eq:forwadSDE}
    \mathrm{d}\bm{x}_{\tau} = -\frac{\beta_{\tau}}{2} \bm{x}_{\tau} \, \mathrm{d}\tau + \sqrt{\beta_{\tau}} \, \mathrm{d}\bm{w},
\end{equation}
where $\beta_{\tau}: \mathbb{R} \to \mathbb{R} > 0$ is the noise schedule, typically taken to be monotonically increasing of the diffusion timestep $\tau$ \cite{ho2020denoising}, and $\bm{w}$ is the standard Wiener process. For appropriately chosen $\beta_{\tau}$ and sufficiently large diffuse time $T$, it is assumed that the distribution converges to a tractable isotropic Gaussian, i.e. $\bm{x}_T \sim \mathcal{N}(\textbf{0}, \bm{I})$. The learning objective is to recover the original data distribution by reversing this process, which leads to the reverse SDE \cite{anderson1982reverse} as
\begin{equation}
\label{eq:reverseSDE}
    \mathrm{d}\bm{x}_{\tau}\! =\! \left[ -\frac{\beta_{\tau}}{2}\bm{x}_{\tau}\! -\! \beta_{\tau}\nabla_{\bm{x}_{\tau}} \log p_{\tau}(\bm{x}_{\tau}) \right] \mathrm{d}\tau\! +\! \sqrt{\beta_{\tau}} \, \mathrm{d}\bar{\bm{w}},
\end{equation}
where $d\tau$ corresponds to the diffusion step running backward and $d\bar{\bm{w}}$ is the reverse Wiener process. The drift term of Eq. \eqref{eq:reverseSDE} depends on the time-dependent score function $\nabla_{\bm{x}_{\tau}} \log p_{\tau}(\bm{x}_{\tau})$, which is equivalent to the added noise as
\begin{equation}
    \nabla_{\bm{x}_{\tau}} \log p_{\tau}(\bm{x}_{\tau}) = - \frac{\bm{\epsilon}_{\tau}}{\sqrt{1-\bar{\alpha}_{\tau}}},
\end{equation}
where we define $\bar{\alpha}_{\tau} := \exp\left( - \int_{0}^{\tau} \beta_s \, ds \right)$, and $\bm{\epsilon}_{\tau} \sim \mathcal{N}(\textbf{0}, \bm{I})$ is the Gaussian noise injected into the dataset sample $\bm{x}_0$ to produce the perturbed sample $\bm{x}_{\tau}$. Therefore, the reverse denoising process can be learned by optimizing a simplified surrogate objective to predict the added noise $\bm{\epsilon}_{\tau}$ as
\begin{equation}
\label{eq:noise-loss}
\theta^{*}\! =\! \mathop{\arg\min}\limits_{\theta} \mathbb{E}_{\tau \sim \mathcal{U}(0,T),  \bm{x}_{\tau} \sim p_{\tau|0}, \bm{x}_{0} \sim p}\!\left[{\parallel\!\bm{\epsilon}_{\tau}\!-\!\bm{\epsilon}_{\theta}(\bm{x}_{\tau})\!\parallel}^2\right]\!,
\end{equation}
where $\mathcal{U}$ denotes the uniform distribution. With the learned noise function, we can sample data by the reverse denoising SDE in Eq. \eqref{eq:reverseSDE}. 

To guide DMs toward generating data that satisfies specific conditions, conditional information can be incorporated into the diffusion process. While classifier-free guidance \cite{ho2022classifier} provides a unified parameterization of both conditional and unconditional models, its flexibility is limited under complex constraints. In contrast, classifier guidance \cite{dhariwal2021diffusion} has become a widely studied and commonly applied approach\cite{song2021score}. To incorporate conditional information into the generation process, classifier guidance introduces an auxiliary classifier to steer sampling toward desired conditions.
Building upon this principle, the conditional distribution can be derived from the unconditional diffusion prior through Bayes’ rule as
\begin{equation}
    p(\bm{x}_{\tau} \mid \bm{y}) = \frac{p(\bm{x}_{\tau}) \, p(\bm{y} \mid \bm{x}_{\tau})}{p(\bm{y})}. 
\end{equation}
Since $p(\bm{y})$ is independent of $\bm{x}_{\tau}$, Eq.~\eqref{eq:reverseSDE} can be modified as
\begin{align}
\label{eq:reverse-classifier}
    \mathrm{d}\bm{x}_{\tau}\! &=\! \left[\! -\frac{\beta_{\tau}}{2}\bm{x}_{\tau}\! -\! \beta_{\tau}\!\left(\nabla_{\!\bm{x}_{\tau}}\! \log p_{\tau}(\bm{x}_{\tau})\! +\! \nabla_{\!\bm{x}_{\tau}}\! \log p_{\tau}(\bm{y} | \bm{x}_{\tau})\right)\! \right] \mathrm{d}\tau \nonumber\\
    &+ \sqrt{\beta_{\tau}} \, \mathrm{d}\bar{\bm{w}},
\end{align}
Here, the score function $\nabla_{\bm{x}_{\tau}} \log p_{\tau}(\bm{x}_{\tau})$ can be obtained from the diffusion noise model optimized in Eq.~\eqref{eq:noise-loss}, while the conditional likelihood term $\log p_{\tau}(\bm{y}|\bm{x}_{\tau})$ is estimated by an auxiliary network $\mathcal{J}_{\psi}$. The gradients of $\mathcal{J}_{\psi}$ are then incorporated into the sampling process by modifying the drift term of the reverse dynamics as
\begin{align}
\label{eq:reverse model SDE}
    \mathrm{d}\bm{x}_{\tau} \!&=\! \left[\! -\frac{\beta_{\tau}}{2}\bm{x}_{\tau}\!+\! \frac{\beta_{\tau}}{\sqrt{1\!-\!\bar{\alpha}_{\tau}}} \bm{\epsilon}_{\theta}(\bm{x}_{\tau}) \!-\!  \beta_{\tau}\nabla_{\bm{x}_{\tau}} {\mathcal{J}_{\psi}}(\bm{x}_{\tau}) \right] \mathrm{d}\tau \nonumber\\
    \!&+\! \sqrt{\beta_{\tau}} \, \mathrm{d}\bar{\bm{w}}.
\end{align}

\subsection{System Model}
We consider an adaptive, resource-constrained wireless communication network with $N$ nodes, as illustrated in Fig.~\ref{scenario}. Each node is modeled as an autonomous agent that makes sequential decisions on resource allocation. In this setup, direct communication is allowed among 1-hop neighbors via wireless transceivers, while remote interactions rely on multi-hop relaying. The wireless channel follows the Free-Space Path Loss (FSPL) \cite{Harald1946Note} model, where the received power $P_r$ is given by 
\begin{equation} 
P_{r} = P_{t} \cdot G_{t} \cdot G_{r} \cdot L_{p}, 
\label{eq:receivedpower}
\end{equation}
with $P_t$ denoting the transmit power and $G_t$, $G_r$ representing antenna gains at the transmitter and receiver, respectively. The path loss $L_{p}$ is defined as
\begin{equation} 
L_{p} =\left ( \frac{\lambda }{4\pi d}  \right ) ^{2} , 
\label{eq:pathloss}
\end{equation}
where $d$ is the transmission distance and $\lambda ={c}/{f_{c}}$ is the wavelength, with $f_{c}$ representing the carrier frequency and $c$ denoting the speed of light. The FSPL model represents an ideal Line-Of-Sight (LOS) propagation environment, providing stable and predictable signal attenuation for accurate performance evaluation. Every node maintains a packet queue for storing generated data, and unacknowledged packets are appended to the end of the cache queue of forwarding nodes. Meanwhile, the network exhibits heterogeneous traffic, where some nodes produce high-rate flows while others generate packets at lower rates, and the imbalance patterns are not known beforehand. To enable efficient spectrum reuse and mitigate inter-node interference, we adopt the MF-TDMA \cite{9866568} protocol, in which the available resources are partitioned into $M$ orthogonal time slots and $L$ frequency channels, yielding $M \times L$ interference-free Resource Blocks (RBs) per frame. Each transmission frame includes two stages (i.e., network management and traffic transmission). In the first stage, every agent determines its RB demand, which is then normalized to ensure that the total allocation does not exceed $M \times L$. The normalized RBs are then randomly assigned to agents according to their requests. In the second stage, packets are transmitted within the allocated RBs, effectively reducing collision risk and improving spectral efficiency.

\begin{figure}[tbp]
\centerline{\includegraphics[width=0.5\textwidth]{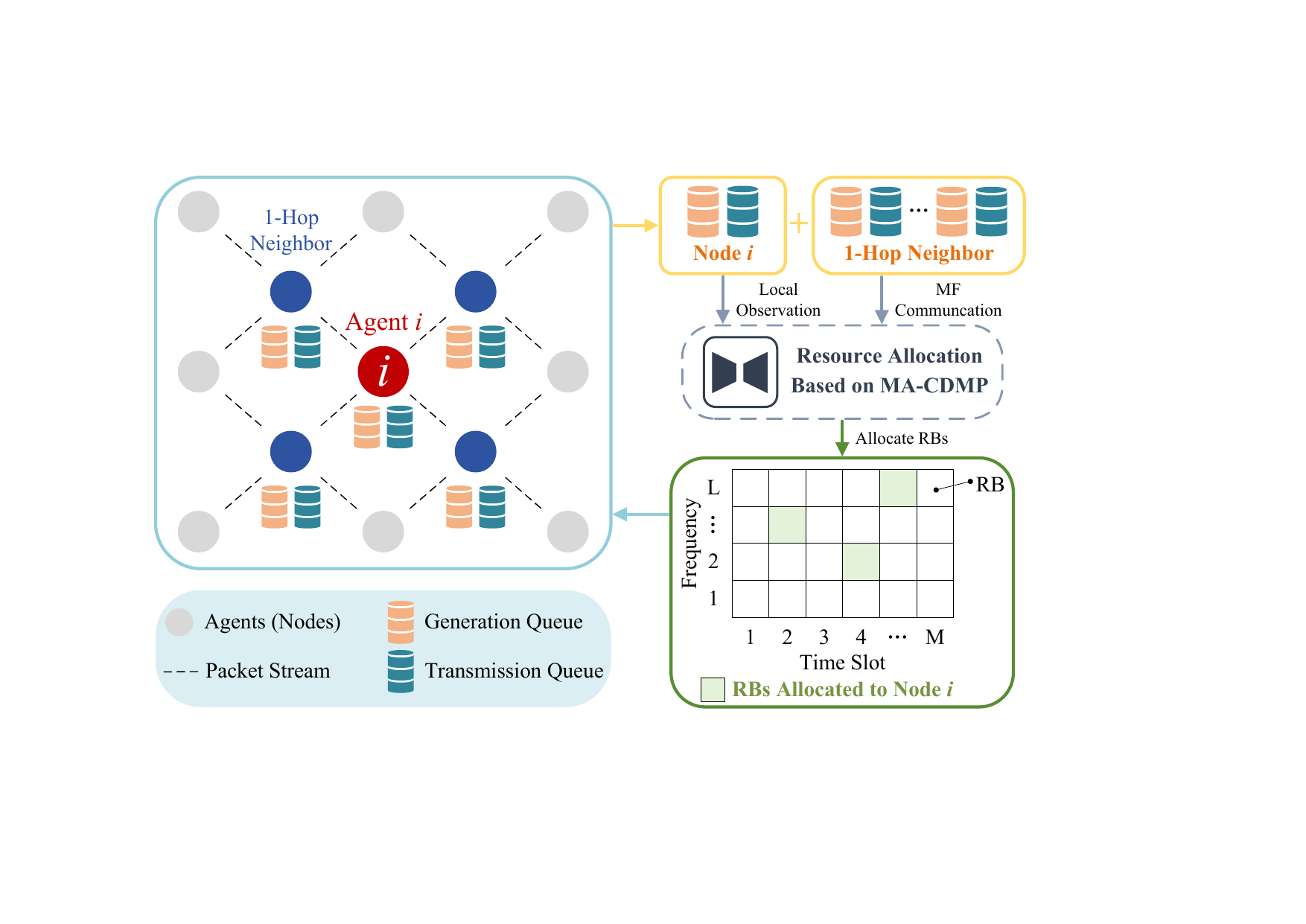}}
\caption{Overview of the multi-node wireless topology with MF-TDMA-based MAC scheduling.}
\label{scenario}
\vspace{-0.2cm}
\end{figure}

We formulate the resource allocation task as a Dec-POMDP\cite{zhu2024madiff}, and define the decision interval as one frame. Specifically, at each timestep (i.e., one frame) $t$, agent $i$ obtains a local observation $o_{t}^{(i)} := \left[ {\rm gen}_{t}^{(i)}, {\rm gen}_{\max,t}^{(i)}, {\rm tran}_{t}^{(i)}, {\rm tran}_{\max,t}^{(i)} \right]$, where ${\rm gen}_{t}^{(i)}$ and ${\rm tran}_{t}^{(i)}$ denote the real-time lengths of the generation and transmission queues, respectively, while ${\rm gen}_{\max,t}^{(i)}$ and ${\rm tran}_{\max,t}^{(i)}$ represent their corresponding maximum values within frame $t$. To efficiently enhance coordination under limited communication resources, each agent exchanges local observations with its 1-hop neighbors. As mentioned earlier, many candidate solutions can be employed, such as consensus-\cite{chen2022sample} or GNN-based aggregation \cite{nayak2023scalable} and the MF approximation. In this paper, we leverage the MF approach due to its theoretical guarantees and computational efficiency, capturing the averaged influence as
\begin{equation}
\label{eq:MF-observation} \overline{{o}}^{(i)}_t \!:=\! \frac{1}{|\mathcal{N}_{i}|} \sum_{j \in \mathcal{N}_i} {o}^{(j)}_t,
\end{equation}
where $\mathcal{N}_{i}$ denotes the 1-hop neighbors of agent $i$. To support long-term decision-making, the action $a_t^{(i)} \in \mathbb{R}$ is generated based on both the local observation sequence and the MF communication sequence, determining the number of RBs requested by the agent. The reward of agent $i$ is defined as 
\begin{equation} 
r_{t}^{(i)} :=- d_{t}^{(i)} , 
\label{eq:reward}
\end{equation}
where $d_{t}^{(i)}$ indicates the average time delay for packets received by node $i$. By minimizing the average delay, we can simultaneously improve throughput and reduce packet loss, thereby enhancing the overall Quality of Service (QoS) without requiring complex weight tuning in multi-objective optimization \cite{meng_conditional_2025a}. In this paper, since nodes exhibit similar structural and operational features, we model them as homogeneous agents. 

For each agent $i$, a policy $\pi^{(i)}$ is learned from its observation sequence and MF communication sequence to collectively maximize the long-term network utility as formulated in Eq.~\eqref{eq:opt_pi}. Given the large-scale characteristics of the resource allocation problem in such wireless networks, we devise a distributed MA-CDMP approach with scalable coordination and low communication overhead, laying the foundation for the algorithmic design presented in the following section.

\section{Resource Allocation With MA-CDMP}\label{IV}
In this section, we discuss how to apply conditional DMs and MF communication to guide the generation of trajectories with high confidence toward superior cumulative resource allocation rewards. The entire procedure for training and implementation of the proposed MA-CDMP framework is illustrated in Fig.~\ref{MACDMP}.  

\begin{figure*}[tbp]
\setlength{\abovecaptionskip}{0cm} 
\setlength{\belowcaptionskip}{-0.5cm} 
\centering
\includegraphics[width=0.9\textwidth]{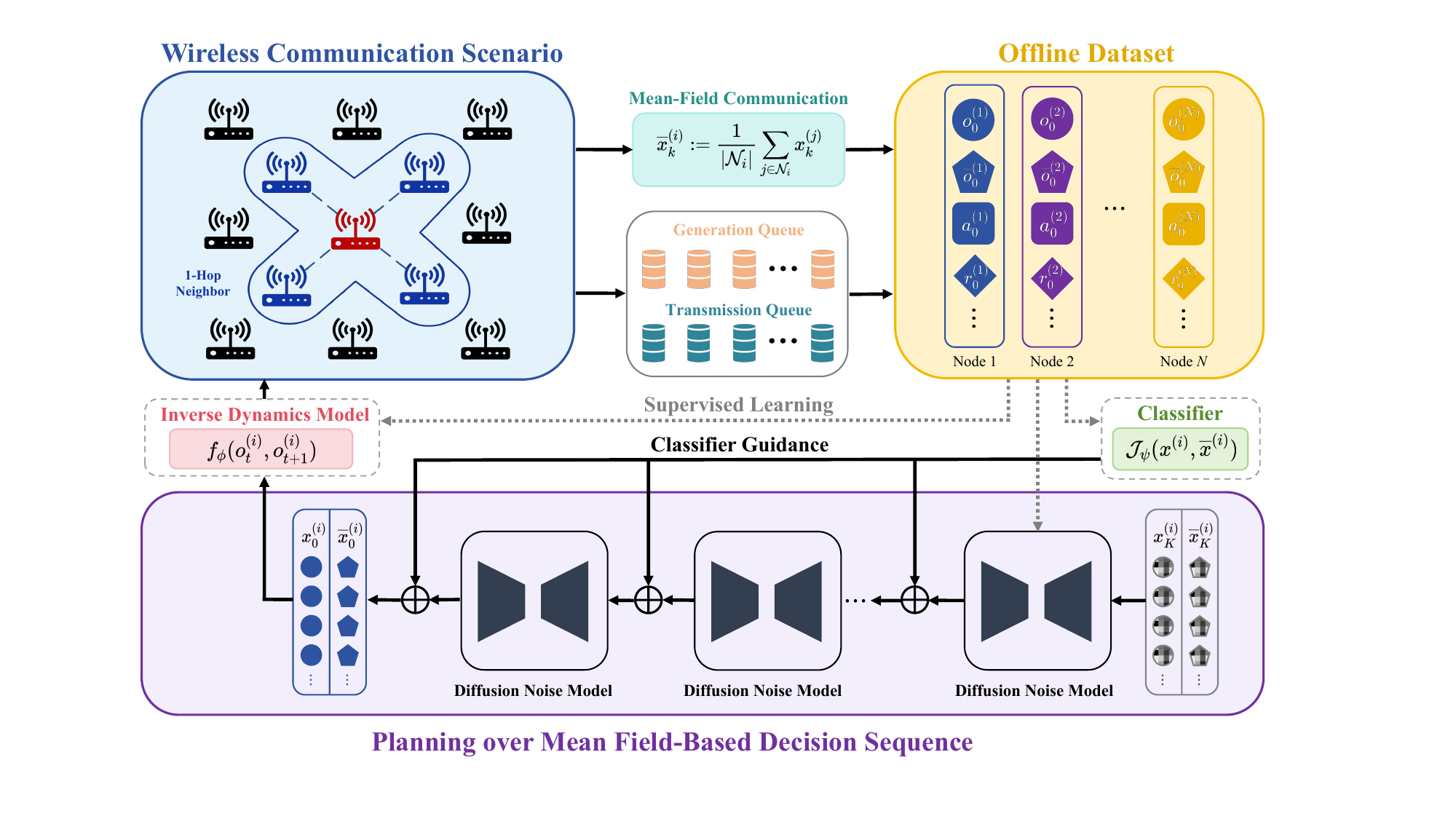}
\caption{The illustration of the MA-CDMP algorithm for resource allocation in wireless communication networks. }
\label{MACDMP}
\vspace{-0.2cm}
\end{figure*}

\subsection{Overview of MA-CDMP}
The proposed MA-CDMP encompasses a DM-based dynamics modeling stage, followed by a trajectory-driven planning optimization stage. In the first modeling stage, considering that the discrete and highly fluctuating nature of RB allocation actions increases the difficulty of prediction and modeling, we define $\bm{x}^{(i)}_{k}$ as an observation sequence of length $H$ with noise level $k$, namely
\begin{equation}
\bm{x}^{(i)}_{k}:={(o_t^{(i)}, o_{t+1}^{(i)}, \cdots, o_{t+H-1}^{(i)})}_{k}.
\end{equation}
Meanwhile, to guide the denoising process toward generating trajectories with high cumulative rewards, the condition $\bm{y}^{(i)}$ could refer to the return of the trajectory ${\rm{Return}}^{(i)}$, that is
\begin{equation}
\label{eq:return}
\bm{y}^{(i)}:= {\rm{Return}}^{(i)}:=\sum\nolimits_{t'=t}^{t+H-1}{\gamma}^{t'-t}r_{t'}^{(i)}.
\end{equation}
To this end, we formulate trajectory modeling in MARL as a conditional generative problem based on DMs with classifier guidance \cite{janner2022planning}, namely
\begin{equation}
\max\limits_{\theta, \psi}\mathbb{E}_{\mathcal{T} \sim \mathcal{D}}\left[\log p_{\theta, \psi}(\bm{x}^{(i)}_0|\bm{y}^{(i)})\right].
\label{equa}
\end{equation}
In this formulation, after characterizing the trajectory distribution $p_{\theta,\psi}$ with the diffusion noise model $\bm{\epsilon}_{\theta}$ and the classifier $\mathcal{J}_{\psi}$, we can generate portions of agent $i$'s trajectory $\bm{x}^{(i)}_0$ conditioned on information $\bm{y}^{(i)}$. For practical implementation, we discretize the continuous forward diffusion SDE in Eq.~\eqref{eq:forwadSDE} and the reverse denoising SDE in Eq.~\eqref{eq:reverse-classifier} into $K$ steps, where each diffusion step is indexed by $k \in \{ 1, \cdots, K\}$. Following \cite{ho2020denoising}, we denote $\bm{x}_{k} := \bm{x}_{{kT}/{K}}$, $\beta_{k} := \beta_{{kT}/{K}}$, and further define $\alpha_{k} := 1 - \beta_{k}$, $\bar{\alpha}_{k} := \prod_{s=1}^{k} \alpha_{s}$. 

When evaluating trajectories, the classifier operates on the joint observation sequence $\widetilde{\bm{x}}_{k}=[\bm{x}_k^{(1)},\cdots,\bm{x}_k^{(N)}]$, as all agents simultaneously make strategic decisions and assess their value functions over the joint trajectory. However, since the dimensionality of $\widetilde{\bm{x}}_{k}$ increases exponentially with the number of agents, it's computationally prohibitive to directly model the classifier $\mathcal{J}_{\psi}(\widetilde{\bm{x}}_{k})$. To overcome this challenge, we decompose the classifier into components that rely solely on pairwise local interactions \cite{yang2018mean} with the agent's 1-hop neighbors as
\begin{equation}
    \mathcal{J}_{\psi}(\widetilde{\bm{x}}_{k}^{(i)}) := \frac{1}{|\mathcal{N}_{i}|} \sum_{j \in \mathcal{N}_i} \mathcal{J}_{\psi}(\bm{x}^{(i)}_{k},\bm{x}^{(j)}_{k}).
\end{equation}
This factorization substantially reduces complexity while still preserving implicit global interactions. The pairwise interaction can be further approximated via MF theory \cite{stanley1971phase}, by replacing neighbors with their mean observation as
\begin{equation}
\label{eq:MF}
\mathcal{J}_{\psi}(\widetilde{\bm{x}}_{k}^{(i)}) \!\sim\! \mathcal{J}_{\psi}({\bm{x}}_{k}^{(i)}, \overline{\bm{x}}_{k}^{(i)}),  \quad\overline{\bm{x}}_{k}^{(i)} \!:=\! \frac{1}{|\mathcal{N}_{i}|} \sum_{j \in \mathcal{N}_i} \bm{x}^{(j)}_{k}.
\end{equation}
We show that under mild conditions, the approximation error is bounded within a symmetric interval, with the further derivation provided in Section~\ref{sec:drift error}.

Subsequently, given a learned inverse dynamics model $f_{\phi}(o_t^{(i)}, o_{t+1}^{(i)})$ \cite{allen2021learning}, the policy at any timestep $t$ in $\bm{x}^{(i)}_{0}$ can be recovered by estimating the action that leads to the observation transition, namely
\begin{equation}
a_t^{(i)}:=f_{\phi}(o_t^{(i)}, o_{t+1}^{(i)}).
\end{equation}
We leave the details of the training procedure in Section \ref{sec:training}, while the implementation process of the frameworks will be enumerated in Section \ref{sec:implementation}.

\subsection{Training of Core Components in MA-CDMP}\label{sec:training}
Building upon the modeling framework introduced in the previous subsection, we describe the training of the key components that enable effective trajectory-driven planning. Since the agents are modeled as homogeneous, we adopt parameter sharing across agents to enhance efficiency. The classifier $\mathcal{J}_{\psi}({\bm{x}}_{k}^{(i)}, \overline{\bm{x}}_{k}^{(i)})$ is trained in a supervised manner to estimate the cumulative return of a trajectory. To maintain consistency, the diffusion noise model $\bm{\epsilon}_{\theta}({\bm{x}}_{k}^{(i)}, \overline{\bm{x}}_{k}^{(i)})$ takes both the local observation sequence  $\bm{x}_{k}^{(i)}$ and the MF observation sequence $\overline{\bm{x}}_{k}^{(i)}$ as inputs, thereby capturing their joint distribution. In parallel, the inverse dynamics model $f_{\phi}(o_t^{(i)}, o_{t+1}^{(i)})$ is trained under a supervised objective, where the true action $a_t^{(i)}$ serves as the ground-truth signal. Given an offline dataset $\mathcal{D}$ containing observation–action trajectories labeled with rewards, the overall training is conducted by jointly optimizing the classifier loss, reverse diffusion loss, and inverse dynamics loss:
\begin{align}
\label{eq:MA-CDMP}
&\mathcal{L}_{\rm{MA-CDMP}}(\psi, \theta,\phi):= \mathbb{E}_{\mathcal{T}\in \mathcal{D}}\left[{\parallel a_t^{(i)}\!-\!f_{\phi}(o_t^{(i)}, o_{t+1}^{(i)})\parallel}^2\right] \nonumber \\ 
&\quad +\mathbb{E}_{k, \mathcal{T}\in \mathcal{D}}\left[{\parallel \bm{\epsilon}_k-\bm{\epsilon}_{\theta}({\bm{x}}_{k}^{(i)}, \overline{\bm{x}}_{k}^{(i)})\parallel}^2\right]\nonumber \\ 
&\quad + \mathbb{E}_{k, \mathcal{T}\in \mathcal{D}}\left[{\parallel {\bm{y}}^{(i)}-\mathcal{J}_{\psi}({\bm{x}}_{k}^{(i)}, \overline{\bm{x}}_{k}^{(i)})\parallel}^2\right]
\end{align}
The overall offline training procedure of MA-CDMP is summarized in Algorithm~\ref{training}.

\begin{algorithm}[!t]
    \caption{Offline training of MA-CDMP.}
    \label{training}
    \renewcommand{\algorithmicrequire}{\textbf{Input:}}
    \renewcommand{\algorithmicensure}{Initialize}
    \renewcommand{\algorithmiccomment}[1]{\hfill $\triangleright$ #1}
    
    \begin{algorithmic}[1]
        \REQUIRE Offline trajectory dataset $\mathcal{D}$, planning horizon $H$, diffusion steps $K$
        \ENSURE Classifier $\mathcal{J}_{\psi}$, diffusion noise model $\bm{\epsilon}_{\theta}$ and inverse dynamics model $f_{\phi}$ with random parameters $\psi$, $\theta$, $\phi$
        
        \FOR{each epoch}
          \FOR{each step}
            \STATE Sample a mini-batch $\mathcal{B}$ of $\bm{x}_0^{(i)}$ in $\mathcal{D}$
            \STATE Predict actions $f_{\phi}(o_t^{(i)}, o_{t+1}^{(i)})$ for each observation transition pair in $\mathcal{B}$
            \STATE Compute the MF observation $\overline{\bm{x}}_{0}^{(i)}$ by Eq.~\eqref{eq:MF}
            \STATE $\bm{y}^{(i)} \gets {\rm{Return}}^{(i)}$ \\ \COMMENT{Obtain the condition information by Eq. \eqref{eq:return}}  
            \STATE $k \sim{\rm{Uniform}}(\{1,2,\cdots,K\})$ 
            \STATE $\bm{\epsilon}_k \sim \mathcal{N}(\textbf{0}, \bm{I})$
            \STATE $ \left[ \bm{x}_{k}^{(i)}, \overline{\bm{x}}_{k}^{(i)}\right] \gets\sqrt{\bar{\alpha}_k}\left[ \bm{x}_0^{(i)}, \overline{\bm{x}}_0^{(i)} \right]+\sqrt{1-\bar{\alpha}_k}\bm{\epsilon}_k$ \\ \COMMENT{Apply the forward diffusion process}
            \STATE Predict noise $\bm{\epsilon}_{\theta}({\bm{x}}_{k}^{(i)}, \overline{\bm{x}}_{k}^{(i)})$
            \STATE Predict return $\mathcal{J}_{\psi}({\bm{x}}_{k}^{(i)}, \overline{\bm{x}}_{k}^{(i)})$
            \STATE Update the classifier $\mathcal{J}_{\psi}$, diffusion noise model $\epsilon_{\theta}$ and inverse dynamics model $f_{\phi}$ by Eq.~\eqref{eq:MA-CDMP}
            \ENDFOR
        \ENDFOR
    
    \end{algorithmic}
\end{algorithm}

\subsection{Implementation with MF-Enhanced Classifier Guidance}\label{sec:implementation}
We implement the well-trained classifier, diffusion noise model, and inverse dynamics model to generate trajectories for policy optimization. Formally, the trajectory generation process begins from Gaussian noise $\left[ \bm{x}_{K}^{(i)}, \overline{\bm{x}}_{K}^{(i)}\right] \sim \mathcal{N}(\textbf{0}, \bm{I})$ and iteratively refines $\left[ \bm{x}_{k}^{(i)}, \overline{\bm{x}}_{k}^{(i)}\right]$ into $\left[ \bm{x}_{k-1}^{(i)}, \overline{\bm{x}}_{k-1}^{(i)}\right]$ at each diffusion step $k$, with the mean given by
\begin{align}
\label{eq:reverse mean}
    \bm{\mu}_{\theta, \psi} &=\frac{1}{\sqrt{\alpha_k}}\left[\left[ \bm{x}_{k}^{(i)}, \overline{\bm{x}}_{k}^{(i)}\right]\! -\! \frac{1\!-\!\alpha_k}{\sqrt{1\!-\!\bar{\alpha}_k}}\bm{\epsilon}_{\theta}(\left[ \bm{x}_{k}^{(i)}, \overline{\bm{x}}_{k}^{(i)}\right])\right] \nonumber\\
    &+ \zeta \cdot \bm{\Sigma}_k \cdot \nabla {\mathcal{J}_{\psi}}(\left[ \bm{x}_{k}^{(i)}, \overline{\bm{x}}_{k}^{(i)}\right]),  
\end{align}
where $\bm{\Sigma}_k$ denotes the stepwise reverse-diffusion variance prescribed by the sampling schedule (i.e., $\frac{1 - \bar{\alpha}_{k-1}}{1 - \bar{\alpha}_{k}} \, \beta_{k}$ in DDPM-style discretizations \cite{ho2020denoising}), and $\zeta$ is a coefficient that balances the influence of the classifier guidance, controlling the strength of conditional enforcement during sampling. In addition, to ensure consistency with the historical data, the first observation of the trajectory is set according to the current state throughout all diffusion steps. After completing $K$ iterations of the denoising process, we can infer the next researchable predicted local observation and determine the optimal action using our inverse dynamics model $f_{\phi}$. This procedure repeats in a standard receding-horizon control loop described in Algorithm~\ref{implementing}.

\begin{algorithm}[!t]
    \caption{Implementation of MA-CDMP.}
    \label{implementing}
    \renewcommand{\algorithmicrequire}{\textbf{Input:}}
    \renewcommand{\algorithmicensure}{Initialize}
    \renewcommand{\algorithmiccomment}[1]{\hfill $\triangleright$ #1}
    
    \begin{algorithmic}[1]
        \REQUIRE Classifier $\mathcal{J}_{\psi}$, diffusion noise model $\bm{\epsilon}_{\theta}$, inverse dynamics model $f_{\phi}$, planning horizon $H$, diffusion steps $K$, conditional guidance scale $\zeta$, number of nodes $N$, total available RB capacity $ML$
        \ENSURE $t\gets0$
        
        \WHILE{not done}
            \FOR{$i = 1 \cdots N$}
                \STATE Observe $o_t^{(i)}$; Get MF communication $\overline{o}_t^{(i)}$ from 1-hop neighbors $\mathcal{N}_i$
                \STATE Initialize $\left[ \bm{x}_{K}^{(i)}, \overline{\bm{x}}_{K}^{(i)}\right] \sim \mathcal{N}(\textbf{0}, \bm{I})$
                \STATE $\left[ \bm{x}_{K}^{(i)}[0], \overline{\bm{x}}_{K}^{(i)}[0]\right]\gets \left[ o_t^{(i)}, \overline{o}_t^{(i)} \right]$ \\ \COMMENT{Constrain consistency}
                \FOR{$k=K\cdots1$}
                    \STATE $(\bm{\mu}_{k-1}, \bm{\Sigma}_{k-1})\gets{\rm{Denoise}}(\left[ \bm{x}_{k}^{(i)}, \overline{\bm{x}}_{k}^{(i)}\right],\bm{\epsilon}_{\theta}, \mathcal{J}_{\psi})$  \\ \COMMENT{Apply the denoising process by Eq.~\eqref{eq:reverse mean}}
                    \STATE $\left[ \bm{x}_{k-1}^{(i)}, \overline{\bm{x}}_{k-1}^{(i)}\right]\sim \mathcal{N}(\bm{\mu}_{k-1}, \bm{\Sigma}_{k-1})$
                    \STATE $\left[ \bm{x}_{k-1}^{(i)}[0], \overline{\bm{x}}_{k-1}^{(i)}[0]\right]\gets \left[ o_t^{(i)}, \overline{o}_t^{(i)} \right]$ \\ \COMMENT{Constrain consistency}
                \ENDFOR
                \STATE Extract $(o_t^{(i)}, o_{t+1}^{(i)})$ from $\bm{x}_0^{(i)}$
                \STATE Predict $a_t^{(i)}=f_{\phi}(o_t^{(i)}, o_{t+1}^{(i)})$
            \ENDFOR
            \STATE $\hat{a}_t^{(i)} \gets \frac{ML}{\sum_{i=1}^{N} a_t^{(i)}}\, a_t^{(i)}\ \ \forall i$ \\ \COMMENT{Normalize so $\sum_i \hat{a}_t^{(i)}=ML$}
            \STATE Execute $\left\{ \hat{a}_t^{(i)} \mid i=1,\cdots,N \right\}$ ; $t\gets t+1$
        \ENDWHILE
    
    \end{algorithmic}
\end{algorithm}

\section{Error Guarantee for MA-CDMP}\label{V}
In this section, we focus on the theoretical guarantee for the convergence of the proposed MA-CDMP framework. Specifically, we first analyze the estimation error for the drift term of the diffusion SDE under MF approximation. Building on this, we provide an upper bound on the distributional error of conditional diffusion-based generation.

\subsection{Theoretical Upper Bound on the Drift Term Approximation Error}\label{sec:drift error}
Consistent with existing literature \cite{yang2018mean}, we first make the following assumptions.
\begin{assumption}
\label{assump:smooth}
 The classifier $\mathcal{J}_{\psi}$ and diffusion noise model $\bm{\epsilon}_{\theta}$ are L-smooth, with gradients that are Lipschitz-continuous with constants $L_{\mathcal{J}}$ and $L_{\bm{\epsilon}}$, respectively:
\begin{subequations}
\begin{equation}
\label{eq:classifier continuity}
\parallel \!\nabla\! {\mathcal{J}}(\bm{x}^{(i)}\!,\! \bm{x}^{(j)}) \!-\! \nabla\! {\mathcal{J}}(\bm{x}^{(i)}\!,\! \overline{\bm{x}}^{(i)})\!\parallel
\leq L_{{\mathcal{J}}} \parallel\! \bm{x}^{(j)} \!-\! \overline{\bm{x}}^{(i)}\!\parallel,
\end{equation}
\begin{equation}
\parallel\nabla \bm{\epsilon}(\bm{x}^{(i)}\!,\! \bm{x}^{(j)}) - \nabla \bm{\epsilon}(\bm{x}^{(i)}\!,\! \overline{\bm{x}}^{(i)})\!\parallel
\leq L_{\bm{\epsilon}}\parallel\! \bm{x}^{(j)} \!-\! \overline{\bm{x}}^{(i)}\!\parallel.
\end{equation}
\end{subequations}
\end{assumption}
\begin{assumption}
\label{assump:difference}

The difference between observation trajectories is bounded as
\begin{equation}
\|\bm{x}^{(j)} - \overline{\bm{x}}^{(i)}\|^2 = \|\bm{x}^{(j)}\|^2 + \|\overline{\bm{x}}^{(i)}\|^2 - 2(\bm{x}^{(j)})^\top \overline{\bm{x}}^{(i)} \leq C,
\end{equation}
where $C$ is a constant bound related to the magnitude of observation vectors.     
\end{assumption}

\begin{lemma}
\label{lemma:drift}
Under Assumptions \ref{assump:smooth} and \ref{assump:difference}, for the MF communication mechanism, the drift term approximation error of the reverse-time SDE in Eq.~\eqref{eq:reverse model SDE} admits an upper bound as
    \begin{equation}
        \delta_{\text{drift}} \le \frac{C L_{\bm{\epsilon}} \beta_{\tau}}{\sqrt{1 - \bar{\alpha}_{\tau}}} + \sqrt{C} L_{\mathcal{J}} \beta_{\tau}.
\end{equation}
\end{lemma}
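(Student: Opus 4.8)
The plan is to decompose the drift of the reverse-time SDE in Eq.~\eqref{eq:reverse model SDE} into its three additive pieces and to observe that only the noise-model term and the classifier-gradient term are altered by the MF substitution. The linear piece $-\tfrac{\beta_\tau}{2}\bm{x}_\tau$ is identical under both the exact pairwise form and the MF form, so it contributes nothing to $\delta_{\text{drift}}$. Writing the exact drift with the factorized quantities $\tfrac{1}{|\mathcal{N}_i|}\sum_{j\in\mathcal{N}_i}\bm{\epsilon}_\theta(\bm{x}^{(i)},\bm{x}^{(j)})$ and $\tfrac{1}{|\mathcal{N}_i|}\sum_{j\in\mathcal{N}_i}\nabla\mathcal{J}_\psi(\bm{x}^{(i)},\bm{x}^{(j)})$ and the MF drift with $\bm{\epsilon}_\theta(\bm{x}^{(i)},\overline{\bm{x}}^{(i)})$ and $\nabla\mathcal{J}_\psi(\bm{x}^{(i)},\overline{\bm{x}}^{(i)})$, the triangle inequality separates $\delta_{\text{drift}}$ into a noise-model contribution weighted by $\tfrac{\beta_\tau}{\sqrt{1-\bar{\alpha}_\tau}}$ and a classifier contribution weighted by $\beta_\tau$.

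For the classifier contribution I would bound $\|\tfrac{1}{|\mathcal{N}_i|}\sum_{j}[\nabla\mathcal{J}_\psi(\bm{x}^{(i)},\bm{x}^{(j)})-\nabla\mathcal{J}_\psi(\bm{x}^{(i)},\overline{\bm{x}}^{(i)})]\|$ by moving the norm inside the average and applying the gradient-Lipschitz estimate of Assumption~\ref{assump:smooth} termwise, yielding $\tfrac{1}{|\mathcal{N}_i|}\sum_{j} L_{\mathcal{J}}\|\bm{x}^{(j)}-\overline{\bm{x}}^{(i)}\|$. Since Assumption~\ref{assump:difference} gives $\|\bm{x}^{(j)}-\overline{\bm{x}}^{(i)}\|\le\sqrt{C}$, this collapses to $L_{\mathcal{J}}\sqrt{C}$, and after multiplying by $\beta_\tau$ produces the second summand $\sqrt{C}\,L_{\mathcal{J}}\beta_\tau$ of the claimed bound.

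The noise-model contribution is where the main obstacle lies, and it requires a more delicate argument than a direct Lipschitz estimate. Because the drift uses $\bm{\epsilon}_\theta$ itself while Assumption~\ref{assump:smooth} only controls $\nabla\bm{\epsilon}_\theta$, I would expand each $\bm{\epsilon}_\theta(\bm{x}^{(i)},\bm{x}^{(j)})$ around the second argument $\overline{\bm{x}}^{(i)}$ using the integral form of Taylor's theorem. The zeroth-order term reproduces exactly the MF value $\bm{\epsilon}_\theta(\bm{x}^{(i)},\overline{\bm{x}}^{(i)})$, and the crucial observation is that the first-order term vanishes after averaging over neighbors: since $\overline{\bm{x}}^{(i)}=\tfrac{1}{|\mathcal{N}_i|}\sum_{j}\bm{x}^{(j)}$, the centered deviations satisfy $\tfrac{1}{|\mathcal{N}_i|}\sum_{j}(\bm{x}^{(j)}-\overline{\bm{x}}^{(i)})=\bm{0}$, so the gradient at $\overline{\bm{x}}^{(i)}$ acting on this average is annihilated. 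What survives is only the second-order remainder, which the gradient-Lipschitz constant bounds by $\tfrac{L_{\bm{\epsilon}}}{2}\|\bm{x}^{(j)}-\overline{\bm{x}}^{(i)}\|^2$; invoking the squared bound $\|\bm{x}^{(j)}-\overline{\bm{x}}^{(i)}\|^2\le C$ and absorbing the constant factor into $C$ gives $C L_{\bm{\epsilon}}$, which after weighting by $\tfrac{\beta_\tau}{\sqrt{1-\bar{\alpha}_\tau}}$ yields the first summand.

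Summing the two contributions then delivers the stated upper bound. The subtle point to get right is precisely the asymmetry between the two terms: the classifier piece is treated at first order (hence the factor $\sqrt{C}$) because it is already a difference of gradients to which Assumption~\ref{assump:smooth} applies directly, whereas the noise piece admits a second-order estimate (hence the factor $C$) thanks to the exact cancellation of the linear term under neighbor averaging. I expect the verification that this averaging cancellation is valid—and that the quadratic remainder is controlled uniformly over $\tau$ by $\beta_\tau$ and $\bar{\alpha}_\tau$—to be the part demanding the most care.
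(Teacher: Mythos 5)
Your proposal is correct and follows essentially the same route as the paper: the same three-way drift decomposition (linear term unchanged, noise term weighted by $\tfrac{\beta_{\tau}}{\sqrt{1-\bar{\alpha}_{\tau}}}$, classifier term weighted by $\beta_{\tau}$), and an identical triangle-inequality-plus-gradient-Lipschitz argument yielding the $\sqrt{C}\,L_{\mathcal{J}}\beta_{\tau}$ classifier contribution. The only difference is that for the noise-model term the paper simply cites Appendix B of \cite{hao2023gat} for the function-value bound $\parallel\bm{\epsilon}(\bm{x}^{(i)},\overline{\bm{x}}^{(i)})-\bm{\epsilon}(\widetilde{\bm{x}}^{(i)})\parallel\le C L_{\bm{\epsilon}}$, whereas you re-derive it from scratch via the second-order Taylor expansion whose first-order term cancels under neighbor averaging --- which is precisely the argument underlying the cited bound, so your reconstruction (including the explanation of the $C$ versus $\sqrt{C}$ asymmetry) matches the paper's proof in substance while being self-contained.
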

\begin{proof}
Based on the analysis in Appendix B of \cite{hao2023gat}, the MF approximation error admits an upper bound as
\begin{equation}
\parallel{\mathcal{J}}(\bm{x}^{(i)}, \overline{\bm{x}}^{(i)}) - \mathcal{J}(\widetilde{\bm{x}}^{(i)})\parallel \leq C L_{{\mathcal{J}}},
\end{equation}
\begin{equation}
\label{eq:epsilon error}
\parallel\bm{\epsilon}(\bm{x}^{(i)}, \overline{\bm{x}}^{(i)}) - \bm{\epsilon}(\widetilde{\bm{x}}^{(i)})\parallel \leq C L_{{\bm{\epsilon}}}.
\end{equation}
Next, considering the gradient of the classifier over the observation trajectory, we have
\begin{align}
\parallel\nabla\! \mathcal{J}(\widetilde{\bm{x}}^{(i)})  
\!&-\!  \nabla\! \mathcal{J}(\bm{x}^{(i)}\!,\! \overline{\bm{x}}^{(i)}) \parallel \nonumber \\
\!&=\! \left\| \frac{1}{|\mathcal{N}_{i}|} \sum_{j \in \mathcal{N}_{i}} 
\big( \nabla\! {\mathcal{J}}(\bm{x}^{(i)}\!,\! \bm{x}^{(j)}) 
\!-\! \nabla \!{\mathcal{J}}(\bm{x}^{(i)}\!,\! \overline{\bm{x}}^{(i)}) \big) \right\| \nonumber \\
\!&\leq\! \frac{1}{|\mathcal{N}_{i}|} \sum_{j \in \mathcal{N}_{i}} 
\parallel \nabla \!{\mathcal{J}}(\bm{x}^{(i)}\!,\! \bm{x}^{(j)}) 
\!-\! \nabla\! {\mathcal{J}}(\bm{x}^{(i)}\!,\! \overline{\bm{x}}^{(i)}) \parallel \nonumber \\
\!&\leq\! \frac{1}{|\mathcal{N}_{i}|} \sum_{j \in \mathcal{N}_{i}} 
L_{{\mathcal{J}}} \cdot \parallel \bm{x}^{(j)} - \overline{\bm{x}}^{(i)} \parallel,
\end{align}
where the third line follows from the triangle inequality, while the fourth line applies the Lipschitz-continuity condition in Eq.~\eqref{eq:classifier continuity}. With the bounded observation trajectory difference, this yields
\begin{equation}
\label{eq:classifier error}
\parallel\nabla\! \mathcal{J}(\widetilde{\bm{x}}^{(i)}) 
\!-\!  \nabla\! \mathcal{J}(\bm{x}^{(i)},\overline{\bm{x}}^{(i)}) \parallel\, \leq L_{\mathcal{J}}\sqrt{C}.
\end{equation}
Finally, combining Eq.~\eqref{eq:epsilon error} and Eq.~\eqref{eq:classifier error}, we have the lemma.

\end{proof}

\subsection{Theoretical Upper Bound on the SDE-Based Generative Distribution Approximation Error} 

\begin{theorem}
    \label{thm: kl_bound}
    Under the MF communication mechanism, the conditional diffusion modeling error between the true initial distribution $p_0$ and the generated distribution after denoising $q_0$ is upper bounded as
    \begin{equation}
        D_{\text{KL}}(p_{0}\|{q}_{0}) \le \frac{1}{2} M_{2} e^{-\bar{\beta}_T} +\frac{1}{2}\int_{0}^{T} \mathbb{E}\left[\tfrac{\delta_{\text{drift}}^{2}}{\beta_{\tau}}\right] \mathrm{d}\tau,
    \end{equation}
    where we denote $M_{2} := \mathbb{E}\left\| \bm{x}_{t} \right\|^{2} < \infty$ as the second-order moment of the data along the SDE, and $\bar{\beta}_T := \int_0^T \beta_{\tau} \mathrm{d}{\tau}$.
\end{theorem}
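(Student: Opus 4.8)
The plan is to lift the terminal KL divergence to the level of path measures and then apply a Girsanov-type change of measure, so that the two terms in the stated bound emerge respectively as an initialization error and a drift-mismatch error. Concretely, I would let $P$ denote the path measure on $[0,T]$ induced by the ideal time-reversed diffusion that is initialized at the true forward marginal $p_T$ and driven by the exact, full-neighbor reverse drift of Eq.~\eqref{eq:reverse-classifier}, so that its terminal marginal is exactly $p_0$; and let $Q$ denote the generative path measure initialized at the tractable prior $\pi=\mathcal{N}(\mathbf{0},\bm{I})$ and driven by the MF-approximated reverse drift of Eq.~\eqref{eq:reverse model SDE}, with terminal marginal $q_0$. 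Because $p_0$ and $q_0$ are the terminal marginals of $P$ and $Q$, the data-processing inequality yields $D_{\text{KL}}(p_0\|q_0)\le D_{\text{KL}}(P\|Q)$, reducing the problem to a relative entropy between two diffusions that share the same diffusion coefficient $\sqrt{\beta_\tau}$.

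The second step is to split $D_{\text{KL}}(P\|Q)$ by the chain rule on path space into the relative entropy of the initial laws, $D_{\text{KL}}(p_T\|\pi)$, plus the accumulated drift discrepancy. For the initialization term I would exploit the explicit Ornstein--Uhlenbeck structure of the forward process, where $\bm{x}_T=\sqrt{\bar{\alpha}_T}\,\bm{x}_0+\sqrt{1-\bar{\alpha}_T}\,\bm{\epsilon}$ with $\bar{\alpha}_T=e^{-\bar{\beta}_T}$. Conditioning on $\bm{x}_0$ and using convexity of the KL divergence reduces this to a Gaussian-to-Gaussian computation whose leading contribution is $\tfrac{1}{2}\bar{\alpha}_T\,\mathbb{E}\|\bm{x}_0\|^2=\tfrac{1}{2}M_2\,e^{-\bar{\beta}_T}$, while the residual dimensional terms are of order $e^{-2\bar{\beta}_T}$ and can be absorbed, delivering the first term of the bound.

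For the drift-mismatch contribution, the Girsanov computation produces exactly $\tfrac{1}{2}\int_0^T\mathbb{E}\big[\|b_\tau^{\mathrm{true}}-b_\tau^{\mathrm{approx}}\|^2/\beta_\tau\big]\,\mathrm{d}\tau$, where the weighting $1/\beta_\tau$ is inherited from the squared diffusion coefficient. The key observation is that the discrepancy $\|b_\tau^{\mathrm{true}}-b_\tau^{\mathrm{approx}}\|$ is precisely the MF approximation error of the reverse drift, so I would invoke Lemma~\ref{lemma:drift} to replace it by $\delta_{\text{drift}}$, immediately obtaining the second term $\tfrac{1}{2}\int_0^T\mathbb{E}[\delta_{\text{drift}}^2/\beta_\tau]\,\mathrm{d}\tau$. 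Summing the two contributions then gives the claimed inequality.

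The main obstacle I anticipate is the rigorous justification of the change of measure rather than the algebra. The path-space relative entropy is finite and the Girsanov term valid only once an integrability (Novikov-type) condition on the drift difference is secured; following the methodology of \cite{chen2023improved}, I would sidestep the full Novikov condition by instead differentiating $D_{\text{KL}}(p_\tau\|q_\tau)$ in $\tau$ through the associated Fokker--Planck equations and expressing the derivative via the relative Fisher information \cite{sun2017relative}, using the finiteness of $M_2$ together with the smoothness in Assumptions~\ref{assump:smooth}--\ref{assump:difference} to guarantee that the intervening expectations are well-defined. A secondary subtlety is that $\delta_{\text{drift}}$ in Lemma~\ref{lemma:drift} depends on $\tau$ through $\beta_\tau$ and $\bar{\alpha}_\tau$, so I would retain it inside the time integral rather than extract a uniform constant, which is exactly why the bound is expressed as an integral of $\mathbb{E}[\delta_{\text{drift}}^2/\beta_\tau]$.
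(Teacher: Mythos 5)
Your proposal is correct, and in its final, rigorous form it coincides with the paper's proof; the interesting difference is in how you get there. Your headline route---lifting to path measures, applying data processing, and invoking Girsanov to produce the drift-mismatch integral---is precisely the methodology of \cite{chen2023sampling} that the paper deliberately avoids, for exactly the reason you yourself flag: the change of measure requires a Novikov-type integrability condition that cannot be verified here. The paper instead works entirely at the level of marginals: it decomposes $D_{\text{KL}}(p_0\|q_0)$ by the KL chain rule into an initialization term $D_{\text{KL}}(p_T\|q_T)$ (bounded by Lemma~\ref{lemma:OU process}, whose Gaussian computation matches your second paragraph, including the absorption of the $O(e^{-2\bar{\beta}_T})$ dimensional residue) plus a term controlled by differentiating $D_{\text{KL}}(p_\tau\|q_\tau)$ in time through the Fokker--Planck equations (Lemma~\ref{lemma:relative Fisher information}), then cancels the relative Fisher information via the weighted Young inequality and invokes Lemma~\ref{lemma:drift}---which is exactly the fallback you propose in your last paragraph, citing the same reference \cite{chen2023improved}. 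So your Girsanov narrative is best read as a heuristic overlay on the argument you (and the paper) actually carry out: Girsanov buys a transparent derivation of the $\tfrac{1}{2}\int_0^T\mathbb{E}[\|F_1-F_2\|^2/\beta_\tau]\,\mathrm{d}\tau$ form but costs an unverifiable hypothesis, while the differential-inequality route needs only smoothness of the densities and finiteness of $M_2$. The one detail you leave implicit is the mechanism that closes the differential inequality: the inner-product term $\langle F_1-F_2,\nabla\log(p_\tau/q_\tau)\rangle$ must be split by weighted Young so that the $+\tfrac{\beta_\tau}{2}J(p_\tau\|q_\tau)$ piece exactly cancels the $-\tfrac{\beta_\tau}{2}J(p_\tau\|q_\tau)$ dissipation term, leaving only $\tfrac{1}{2\beta_\tau}\mathbb{E}\|F_1-F_2\|^2$; without stating this, "expressing the derivative via the relative Fisher information" does not by itself yield the claimed integrand. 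You also correctly keep $\delta_{\text{drift}}$ inside the time integral, matching the theorem's statement.
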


Before proving Theorem~\ref{thm: kl_bound}, we introduce several key lemmas that establish the necessary theoretical groundwork.

\begin{lemma}\label{lemma:OU process}
    Consider the Ornstein–Uhlenbeck (OU) process with time-varying coefficient in Eq.~\eqref{eq:forwadSDE}. The KL divergence between $p_T$ and the standard Gaussian distribution $\rho$ satisfies
    \begin{equation}
        D_{\text{KL}}(p_{T}\,\|\,\rho) \;\lesssim\; \tfrac{1}{2} M_{2} \cdot e^{-\bar{\beta}_T}.
    \end{equation}
\end{lemma}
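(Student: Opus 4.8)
The plan is to recognize that $\rho=\mathcal{N}(\mathbf{0},\bm I)$ is precisely the invariant distribution of the forward SDE in Eq.~\eqref{eq:forwadSDE}: for a constant schedule the drift $-\tfrac{\beta}{2}\bm x$ together with the diffusion $\sqrt{\beta}\,\mathrm{d}\bm w$ generates an Ornstein--Uhlenbeck semigroup whose stationary law is the standard Gaussian, and a time-varying $\beta_\tau$ only reparametrizes the clock while leaving $\rho$ invariant. Consequently I would prove the lemma by quantifying the \emph{exponential relaxation} of $p_\tau$ toward $\rho$ and then specializing to $\tau=T$. The natural intrinsic time is $\bar\beta_\tau=\int_0^\tau\beta_s\,\mathrm{d}s$, so the closed-form marginal $\bar\alpha_\tau=e^{-\bar\beta_\tau}$ already signals the target $e^{-\bar\beta_T}$ decay; this is the standard ``convergence of the forward process'' step in the diffusion analyses of~\cite{chen2023improved,chen2023sampling}.

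The mechanical core is an entropy-dissipation argument. First I would differentiate the relative entropy along the flow and invoke the de Bruijn--type identity to obtain
\[
\frac{\mathrm{d}}{\mathrm{d}\tau}\,D_{\text{KL}}(p_\tau\,\|\,\rho)=-\frac{\beta_\tau}{2}\,I(p_\tau\,\|\,\rho),
\]
where $I(p_\tau\,\|\,\rho)=\mathbb{E}_{p_\tau}\big\|\nabla\log(p_\tau/\rho)\big\|^{2}$ is the relative Fisher information of~\cite{sun2017relative}. Second, because the reference measure $\rho$ is a standard Gaussian, it obeys the log-Sobolev inequality with constant one, giving $D_{\text{KL}}(p_\tau\,\|\,\rho)\le\tfrac12\,I(p_\tau\,\|\,\rho)$. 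Chaining the two facts yields the differential inequality $\tfrac{\mathrm{d}}{\mathrm{d}\tau}D_{\text{KL}}(p_\tau\,\|\,\rho)\le-\beta_\tau\,D_{\text{KL}}(p_\tau\,\|\,\rho)$, and a Grönwall integration over $[0,T]$ delivers $D_{\text{KL}}(p_T\,\|\,\rho)\le e^{-\bar\beta_T}\,D_{\text{KL}}(p_0\,\|\,\rho)$, which already exhibits the advertised rate.

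The last step, and the main obstacle, is converting the initial relative entropy $D_{\text{KL}}(p_0\,\|\,\rho)$ into the second-moment quantity $M_2$. Writing the relative entropy against a standard Gaussian explicitly, $D_{\text{KL}}(p_0\,\|\,\rho)=\tfrac12 M_2+\tfrac{d}{2}\log(2\pi)-h(p_0)$, where $d$ is the ambient dimension and $h(p_0)$ the differential entropy of the data, I would argue that the dominant contribution is the second-moment term $\tfrac12 M_2$, with the residual dimensional and entropy terms absorbed into the $\lesssim$ of the statement. This is exactly why the bound is asymptotic rather than an equality: for a concentrated $p_0$ at small $T$ the entropy deficit is not negligible, but in the relevant large-$T$ regime the prefactor is controlled by $\tfrac12 M_2$, so substituting it yields $D_{\text{KL}}(p_T\,\|\,\rho)\lesssim\tfrac12 M_2\,e^{-\bar\beta_T}$. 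The delicate points I would scrutinize are the validity of the log-Sobolev constant and the dissipation identity under the time-varying schedule (handled by the clock change to $\bar\beta_\tau$), together with the regularity needed to differentiate under the integral and the finiteness $M_2<\infty$ assumed in the statement.
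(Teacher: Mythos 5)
Your core machinery (de Bruijn entropy dissipation plus the Gaussian log-Sobolev inequality and Gr\"onwall) is sound and correctly yields
\begin{equation*}
D_{\text{KL}}(p_{T}\,\|\,\rho)\;\le\; e^{-\bar\beta_T}\,D_{\text{KL}}(p_{0}\,\|\,\rho),
\end{equation*}
but the final step---replacing $D_{\text{KL}}(p_{0}\,\|\,\rho)$ by $\tfrac12 M_2$ ``up to $\lesssim$''---is a genuine gap, not a harmless absorption of lower-order terms. Writing $D_{\text{KL}}(p_{0}\,\|\,\rho)=\tfrac12 M_2+\tfrac{d}{2}\log(2\pi)-h(p_0)$, the residual $\tfrac{d}{2}\log(2\pi)-h(p_0)$ is \emph{not} controlled by $M_2$: it can dominate $M_2$ by an arbitrary factor, and it is $+\infty$ whenever $p_0$ is singular (discrete data, manifold-supported data, or any concentrated distribution with $h(p_0)=-\infty$), even though $M_2<\infty$. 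Your appeal to a ``large-$T$ regime'' does not rescue this, because the prefactor $D_{\text{KL}}(p_{0}\,\|\,\rho)$ is independent of $T$; both the $M_2$ part and the entropy-deficit part carry the same $e^{-\bar\beta_T}$ factor, so the deficit never becomes relatively negligible. Restarting the argument from a positive time $\tau_0$ to regularize $p_{\tau_0}$ does not help either, since bounding $D_{\text{KL}}(p_{\tau_0}\,\|\,\rho)$ by $M_2$ then reintroduces a $-d\log\big(1-e^{-\bar\beta_{\tau_0}}\big)$ blow-up. Indeed, the paper's remark after the lemma makes exactly this point: its estimate deliberately avoids any dependence on the initial distance $D_{\text{KL}}(p_0\,\|\,\rho)$, which is the quantity your route (in the style of \cite{chen2023sampling}) is built on.

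The paper's proof sidesteps the issue entirely with a more elementary argument: it writes $p_T$ as the mixture $\int p_T(\cdot\,|\,\bm{x}_0)\,p_0(\mathrm{d}\bm{x}_0)$ of the closed-form conditional Gaussians $\mathcal{N}\big(e^{-\bar\beta_T/2}\bm{x}_0,(1-e^{-\bar\beta_T})\bm{I}\big)$, applies Jensen's inequality (convexity of KL) to reduce to Gaussian-versus-Gaussian divergences, and evaluates these in closed form. The only dependence on $p_0$ that survives is $\mathbb{E}\|\bm{x}_0\|^2=M_2$, and the dimension-dependent leftover $-d\log\big(1-e^{-\bar\beta_T}\big)-d\,e^{-\bar\beta_T}=O\big(d\,e^{-2\bar\beta_T}\big)$ is genuinely higher order, which is what legitimizes the $\lesssim$. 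If you want to keep your entropy-dissipation route, you must either add the assumption $D_{\text{KL}}(p_0\,\|\,\rho)<\infty$ (weakening the lemma) or splice in the paper's Jensen-plus-closed-form step to produce the $M_2$ prefactor; as written, your bound does not imply the stated one.
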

\noindent The proof of Lemma \ref{lemma:OU process} is provided in Appendix \ref{proof:lemma 1}. Notably, this estimate does not depend on the initial distance $D_{\text{KL}}(p_0\|\rho)$, as in \cite{chen2023sampling}, and further extends the derivation in \cite{chen2023improved} to the time-varying OU process.

We define the relative Fisher information between $p_{\tau}$ and $q_{\tau}$ by $J(p_{\tau} \,\|\, q_{\tau}) = \int p_{\tau}(X_{\tau}) \, \left\| \nabla \log \frac{p_{\tau}(X_{\tau})}{q_{\tau}(X_{\tau})} \right\|^2 \, \mathrm{d}X_{\tau}$, and will have the following lemma.
\begin{lemma}\label{lemma:relative Fisher information}
    Consider the following two It\^{o} processes with the same diffusion term $g(\tau)$
    \begin{align}
        \mathrm{d}X_{\tau} &= F_{1}(X_{\tau})\,\mathrm{d}{\tau} + g({\tau})\,\mathrm{d}\bm{w}, \\
        \mathrm{d}Y_{\tau} &= F_{2}(Y_{\tau})\,\mathrm{d}{\tau} + g({\tau})\,\mathrm{d}\bm{w},
    \end{align}
    where $F_{1}(X_{\tau}), F_{2}(Y_{\tau}), g({\tau})$ are continuous functions. We assume the uniqueness and regularity condition:
    \begin{itemize}
        \item The two SDEs have unique solutions.
        \item $X_{\tau}, Y_{\tau}$ admit densities $p_{\tau}, q_{\tau} \in C^2(\mathbb{R}^d)$ for $T > \tau > 0$.
    \end{itemize}
    Then the evolution of $D_{\text{KL}}(p_{\tau} \,\|\, q_{\tau})$ is given by
    \begin{align}\label{eq:lemma 2}
        \frac{\partial}{\partial {\tau}} D_{\text{KL}}(&p_{\tau} \,\|\, q_{\tau})  = -\tfrac{1}{2}g({\tau})^2 J(p_{\tau} \,\|\, q_{\tau}) \nonumber \\
        &+ \mathbb{E} \!\left[ \left\langle F_{1}(X_{\tau}) - F_{2}(X_{\tau}), \nabla \log \frac{p_{\tau}(X_{\tau})}{q_{\tau}(X_{\tau})} \right\rangle \right].
    \end{align}
\end{lemma}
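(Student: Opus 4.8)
The plan is to convert the statement about the diffusions $X_\tau,Y_\tau$ into a deterministic computation on their densities through the associated Fokker--Planck (forward Kolmogorov) equations, and then differentiate the KL functional under the integral sign. Since the two SDEs share the same diffusion coefficient $g(\tau)$, the densities obey
\begin{align*}
\partial_\tau p_\tau &= -\nabla\!\cdot\!\left(F_1(x)\, p_\tau\right) + \tfrac{1}{2}g(\tau)^2 \Delta p_\tau, \\
\partial_\tau q_\tau &= -\nabla\!\cdot\!\left(F_2(x)\, q_\tau\right) + \tfrac{1}{2}g(\tau)^2 \Delta q_\tau.
\end{align*}
The assumed uniqueness and the $C^2$-regularity of $p_\tau,q_\tau$ on $(0,T)$ ensure these hold classically and that the interchange of differentiation and integration below is legitimate.

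Next I would differentiate $D_{\text{KL}}(p_\tau\|q_\tau)=\int p_\tau\log(p_\tau/q_\tau)\,\mathrm{d}x$ in $\tau$. Expanding the product rule produces a term $\int p_\tau\,\partial_\tau\log p_\tau\,\mathrm{d}x=\int\partial_\tau p_\tau\,\mathrm{d}x$, which vanishes because the total mass $\int p_\tau\,\mathrm{d}x=1$ is conserved. What remains is
\begin{equation*}
\frac{\partial}{\partial\tau}D_{\text{KL}}(p_\tau\|q_\tau)=\int(\partial_\tau p_\tau)\log\frac{p_\tau}{q_\tau}\,\mathrm{d}x-\int\frac{p_\tau}{q_\tau}\,\partial_\tau q_\tau\,\mathrm{d}x.
\end{equation*}
I then substitute the two Fokker--Planck right-hand sides and split each integral into its drift and diffusion parts.

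For the drift parts, integration by parts moves the divergence onto $\log(p_\tau/q_\tau)$ and onto $p_\tau/q_\tau$; using the identity $\nabla(p_\tau/q_\tau)=(p_\tau/q_\tau)\nabla\log(p_\tau/q_\tau)$, both contributions become expectations against $p_\tau$ and combine into $\mathbb{E}[\langle F_1(X_\tau)-F_2(X_\tau),\nabla\log(p_\tau/q_\tau)\rangle]$, which is exactly the second term of the lemma. For the diffusion parts, a further integration by parts turns the two Laplacian terms, after the same logarithmic-gradient identity is applied, into $-\tfrac{1}{2}g(\tau)^2\int p_\tau\,\nabla\log(p_\tau/q_\tau)\cdot\nabla\log(p_\tau/q_\tau)\,\mathrm{d}x$, which is precisely $-\tfrac{1}{2}g(\tau)^2 J(p_\tau\|q_\tau)$. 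Summing the two pieces yields Eq.~\eqref{eq:lemma 2}.

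The main obstacle is rigor rather than algebra: every integration by parts must discard its boundary term, so I would invoke the $C^2$-regularity together with sufficient tail decay of $p_\tau,q_\tau$ and their gradients to guarantee the probability fluxes vanish at infinity, and I would similarly justify $\int\partial_\tau p_\tau\,\mathrm{d}x=0$ by dominated convergence under the same decay hypotheses. A second subtlety is that the two Laplacian contributions only collapse to the relative Fisher information \emph{after} they are recombined: isolating either alone leaves cross terms in $\nabla\log p_\tau$ and $\nabla\log q_\tau$ that cancel only in the full sum, so the bookkeeping of these terms is where I would be most careful.
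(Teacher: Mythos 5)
Your proposal is correct and follows essentially the same route as the paper's proof: Fokker--Planck equations for both densities, differentiation of the KL functional (with the mass-conservation term vanishing), integration by parts with vanishing boundary fluxes, and recombination of the drift parts into the expectation term and of the diffusion parts into $-\tfrac{1}{2}g(\tau)^2 J(p_\tau\|q_\tau)$. Your explicit remarks on mass conservation and on the cancellation of the cross terms in $\nabla\log p_\tau$ and $\nabla\log q_\tau$ are exactly the bookkeeping the paper carries out in Appendix B.
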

\noindent The proof of Lemma \ref{lemma:relative Fisher information} is included in Appendix \ref{proof:lemma 2}. 

Now, we are ready to prove Theorem \ref{thm: kl_bound}.
\begin{proof}
    By applying the chain rule of KL divergence, the error can be decomposed as 
\begin{equation}
\label{eq:KL chain rule}
\scalebox{0.88}{$
    D_{\text{KL}}(p_{0}\|{q}_{0}) \!\leq\! D_{\text{KL}}(p_{T}\|{q}_{T}) \!+\! \mathbb{E}_{\bm{x}_T} D_{\text{KL}}\!\left(p(\bm{x}_0|\bm{x}_T)\|q(\bm{x}_0|\bm{x}_T)\right).
$}
\end{equation}
Since the reverse denoising process begins from Gaussian noise, i.e., $q_T \sim \mathcal{N}(\textbf{0}, \bm{I})$, the first term can be bounded by Lemma~\ref{lemma:OU process}. For the second term in Eq.~\eqref{eq:KL chain rule}, by applying Lemma~\ref{lemma:relative Fisher information}
and the weighted Young inequality, we yield
\begin{align}
    \frac{\partial}{\partial {\tau}} &D_{\text{KL}}\!\left(p_{\tau}\|q_{\tau}\right)\nonumber\\ 
    &\leq -\tfrac{\beta_{\tau}}{2} J\!\left(p_{\tau}\|q_{\tau}\right) 
    \!+\! \left( \tfrac{1}{2\beta_{\tau}} \mathbb{E}\!\left\| F_{1} \!-\! F_{2} \right\|^{2} 
    \!+\! \tfrac{\beta_{\tau}}{2} J\!\left(p_{\tau}\|q_{\tau}\right) \right) \nonumber \\
    &= \tfrac{1}{2\beta_{\tau}} \mathbb{E}\!\left\| F_{1} - F_{2} \right\|^{2}.
\end{align}
Integrating both sides of this inequality with respect to $\tau$ over the interval $[0,T]$ further yields
\begin{equation}
    \mathbb{E}_{\bm{x}_T} D_{\text{KL}}\!\left(p(\bm{x}_0|\bm{x}_T)\|q(\bm{x}_0|\bm{x}_T)\right) 
\!\leq\! \textstyle\int_{0}^{T}\! 
\mathbb{E}
\tfrac{1}{2\beta_{\tau}}\left\| \big( F_{1} \!-\! F_{2}\big) \right\|^{2} \!\mathrm{d}\tau.
\end{equation}
By substituting Eq.~\eqref{eq:KL chain rule} and incorporating the upper bound of the drift term error in Lemma \ref{lemma:drift}, we obtain
the theorem.
\end{proof}

\begin{remark}
Theorem~\ref{thm: kl_bound} establishes that the conditional diffusion modeling error of MA-CDMP under MF communication remains provably bounded. Different from \cite{chen2023sampling}, which applies Girsanov’s theorem \cite{le2016brownian} to bound the KL divergence, we employ a differential inequality argument that achieves the same conclusion without such restrictive assumptions. One significant advantage of our derivations is the contingency on a more relaxed and practical assumption: the approach in \cite{chen2023sampling} requires Novikov’s condition, which may not always hold. Instead, our analysis depends on a trajectory-smoothness assumption only\cite{chen2023improved}. 

Theorem~\ref{thm: kl_bound} implies that with an appropriate noise schedule and well-trained model parameters, this bound ensures that the approximation error remains controlled, thereby guaranteeing the algorithm’s convergence and reliability in modeling the stochastic dynamics of multi-agent communication systems.
\end{remark}

\section{Performance Evaluation}\label{VI}
In this section, we present numerical experiments to assess the performance and verify the effectiveness of the proposed MA-CDMP framework. We further examine key hyperparameters to characterize their influence and demonstrate robustness.

\subsection{Experimental Settings}
We implement the diffusion noise model $\bm{\epsilon}_{\theta}$ using a temporal U-Net\cite{ronneberger2015u} with $8$ residual blocks, organized into $4$ down-sampling layers and $4$ symmetric up-sampling layers. Timestep and condition embeddings are concatenated and injected into the first temporal convolution of each block. The classifier $\mathcal{J}_{\psi}$ adopts the down-sampling half of this U-Net, with a final linear layer to produce a scalar output. The inverse dynamics model $f_{\phi}$ is designed as a feed-forward neural network that maps observation transitions to actions. To enhance efficiency and stability, parameters are shared among homogeneous agents.

We employ OPNET as the simulation platform of the wireless communication network. The simulated area spans $10$ km $\times$ $10$ km with communication nodes randomly distributed. Each node supports 1-hop transmission within a $3.6$ km radius and operates over $4$ channels at a data rate of $2$ Mbps. Each transmission frame consists of $10$ time slots. To construct the offline dataset with diverse network loads, we configure scenarios with $8$ nodes (high- to low-speed ratios of $2:6$ and $4:4$) and $9$ nodes (ratios of $2:7$ and $4:5$). Packet generation follows an exponential distribution, with the mean arrival time adjusted based on the speed ratio and network topology to ensure load balancing and prevent congestion in dense regions. RBs are allocated according to the high- to low-speed ratio in each scenario. We record the environment data per frame, producing a dataset of $1,000$ action-state trajectories labeled with rewards, each corresponding to a communication duration of $30$ seconds (i.e., $6,000$ timesteps).

To assess the performance of the proposed MA-CDMP algorithm, we follow the evaluation in \cite{pan2022plan} and extend several state-of-the-art offline RL methods to the multi-agent setting under the DTDE paradigm. The compared baselines include MA-CQL\cite{kumar2020conservative}, MA-TD3+BC\cite{fujimoto2021minimalist}, MA-Diffuser\cite{janner2022planning}, MA-DT\cite{chen2021decision}, and MA-DD\cite{ajay2022conditional}. To further highlight the effectiveness of our framework, we also compare against MADiff \cite{zhu2024madiff}, a representative algorithm under the CTDE scheme based on DMs. All methods are trained on the same offline dataset for $100$ epochs, epoch comprising $1{,}000$ training steps. During testing, the learned policies are deployed on the OPNET platform under random communication scenarios. Performance is measured over $5$-second episodes using the average reward and key QoS metrics, including average throughput, average delay, and packet loss rate. We report the mean and standard deviation across $3$ independent scenario seeds to ensure statistical reliability. The key simulation parameters used throughout the experiments are listed in Table~\ref{tab:parameters}.

\begin{table}[!t]
    \caption{List of Key Parameter Settings for the Simulation.}
    \label{tab:parameters}
    \centering
    \renewcommand{\arraystretch}{1.2}
    \begin{tabularx}{0.9\linewidth}{|X|p{2cm}|} 
        \hline
        \textbf{Parameters Description} & \textbf{Value} \\
        \hline
        Number of time slots in a frame & $M = 10$ \\
        Number of channels & $L = 4$ \\
        Coverage area of the simulation environment & $10$ km $\times$ $10$ km \\
        1-hop transmission radius of each node & $3.6$ km \\
        Packet transmission rate & $2$ Mbps \\
        Communication duration of a trajectory in the offline dataset & $30$ s \\
        Communication duration of an episode during testing & $5$ s \\
        \hline
    \end{tabularx}
\end{table}

\subsection{Numerical Results}
\subsubsection{Performance Comparison}
First, to demonstrate the superiority of the proposed MA-CDMP algorithm over other DTDE-based methods, we conduct comparative experiments, as shown in Fig.~\ref{MA-CDMP}. It can be observed that MA-CDMP achieves higher average rewards after sufficient training and exhibits smoother convergence with smaller reward variance, indicating more stable learning behavior. These results validate the effectiveness of MA-CDMP, showing that the integration of conditional diffusion planning and MF interaction enables more reliable and consistent policy optimization in wireless communication scenarios.

\begin{figure}[t]
\centerline{\includegraphics[width=\linewidth]{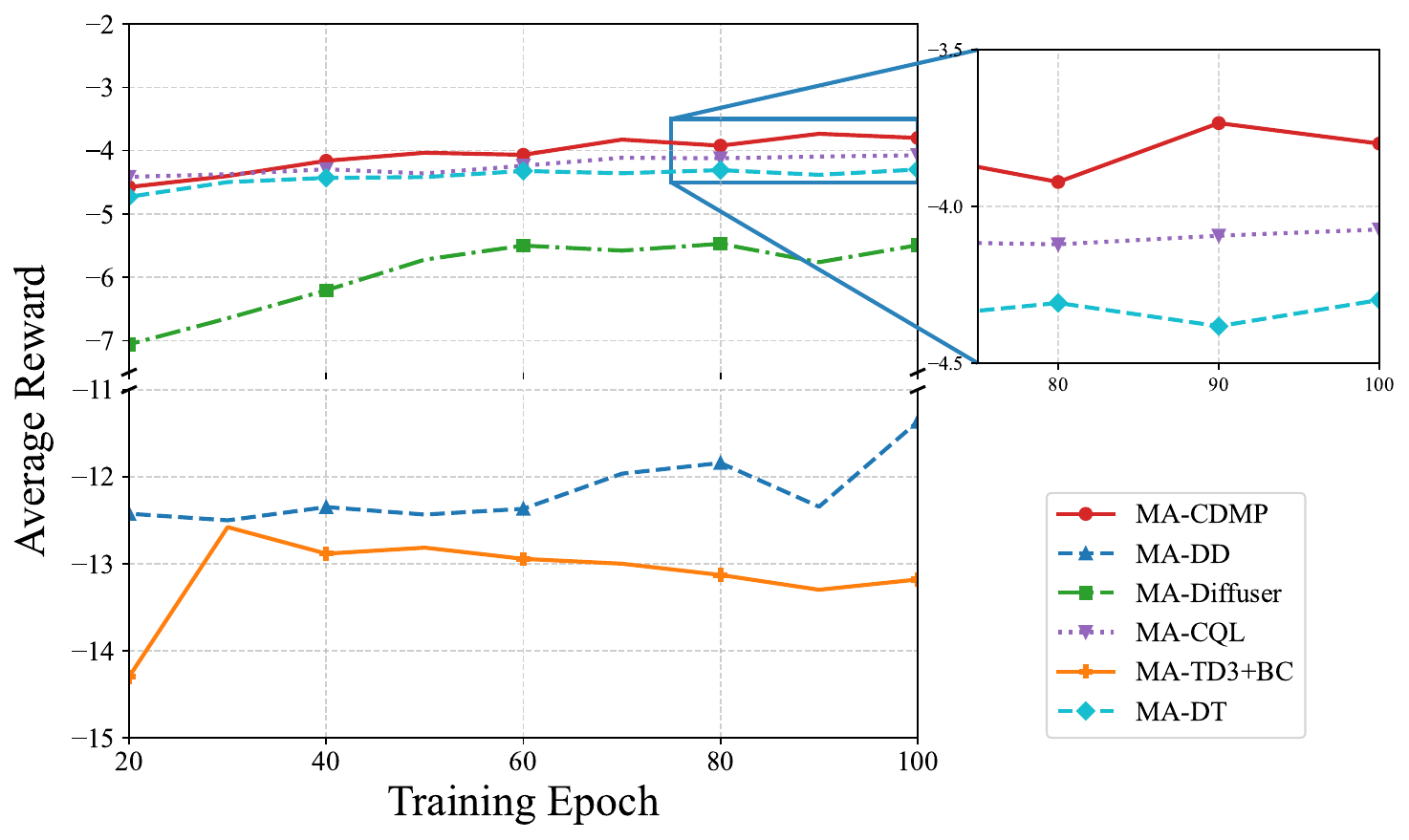}}
\vspace{-0.2cm}
\caption{Comparison of MA-CDMP with other methods in terms of average reward.}
\label{MA-CDMP}
\vspace{-0.4cm}
\end{figure}

To further evaluate the effectiveness of MA-CDMP, we compare the detailed QoS performance under both ideal and limited RF environments, as illustrated in Table~\ref{tab:video_pred}. In the limited RF scenario, the carrier frequency is increased to introduce higher propagation loss, and the transmission power is reduced to emulate shorter communication ranges with degraded signal quality, following the FSPL model in Eq.~\eqref{eq:receivedpower}. As shown in the results, although overall performance slightly declines under the limited RF condition, MA-CDMP consistently achieves higher average throughput, lower average delay, and reduced packet loss rate compared with all baselines. These findings verify that MA-CDMP maintains strong adaptability and robustness across varying channel conditions, underscoring its practicality and reliability in complex and dynamic wireless deployment scenarios.




\begin{table*}[t]
\centering
\caption{Comparison of MA-CDMP with other methods in terms of QoS under ideal RF and limited RF scenarios.}
\label{tab:video_pred}
\setlength{\tabcolsep}{6pt}

\begin{tabular}{l cc cc cc}
\toprule
\multirow{2}{*}{\textbf{Algorithms}} 
& \multicolumn{2}{c}{\textbf{Average Throughput} $\uparrow$} 
& \multicolumn{2}{c}{\textbf{Average Delay} $\downarrow$} 
& \multicolumn{2}{c}{\textbf{Packet Loss Rate} $\downarrow$} \\
\cmidrule(lr){2-3}\cmidrule(lr){4-5}\cmidrule(lr){6-7}
& Ideal RF & Limited RF & Ideal RF & Limited RF & Ideal RF & Limited RF \\
\midrule
MA-TD3+BC        & 1638.340 & 1583.916 & 0.142 & 0.197 & 0.075491 & 0.105804 \\
MA-DD            & 1653.267 & 1591.878 & 0.124 & 0.180 & 0.066170 & 0.096733 \\
MA-Diffuser      & 1716.524 & 1662.399 & 0.065 & 0.120 & 0.031228 & 0.062148 \\
MA-DT            & 1726.390 & 1671.691 & 0.053 & 0.120 & 0.025690 & 0.060918 \\
MA-CQL           & 1729.809 & 1673.839 & 0.051 & 0.113 & 0.023786 & 0.056638 \\
\textbf{MA-CDMP (Ours)} 
                & \textbf{1732.852} & \textbf{1687.863} 
                & \textbf{0.048} & \textbf{0.102} 
                & \textbf{0.021732} & \textbf{0.049377} \\
\bottomrule
\end{tabular}
\end{table*}

Additionally, we compare the performance of the proposed MA-CDMP algorithm with the CTDE-based MADiff~\cite{zhu2024madiff} method, as presented in Fig.~\ref{DTDE}. It can be observed that MA-CDMP achieves faster performance improvement and maintains consistently higher average rewards with smaller variance. These results highlight the advantage of the DTDE paradigm adopted in MA-CDMP, which could yield more stable learning dynamics, ensure better consistency between training and execution, and enhance adaptability in dynamic multi-agent environments.

\begin{figure}[t]
\centerline{\includegraphics[width=0.9\linewidth]{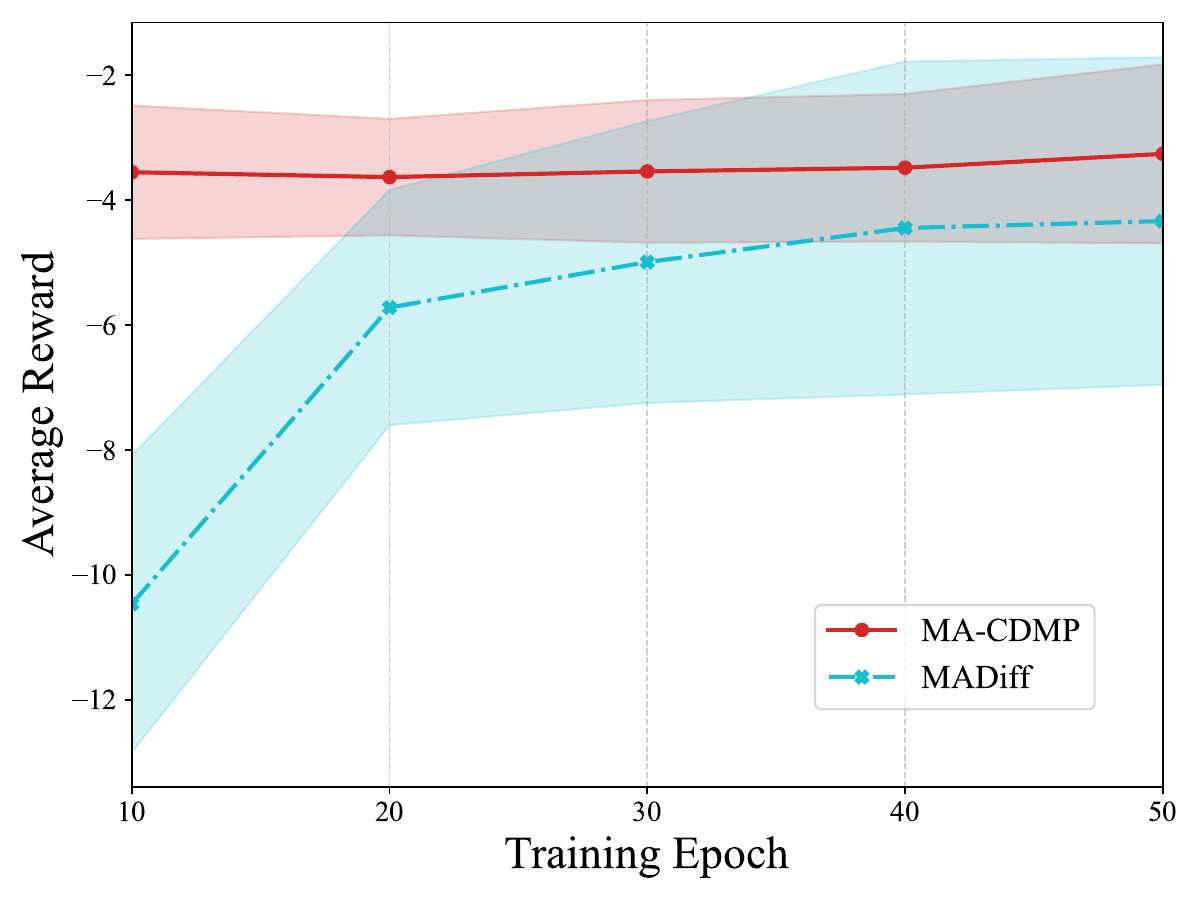}}
\vspace{-0.2cm}
\caption{Comparison of MA-CDMP with MADiff in terms of average reward.}
\label{DTDE}
\vspace{-0.4cm}
\end{figure}

\subsubsection{Ablation Studies}
To validate the effectiveness of the MF communication mechanism in the MA-CDMP framework, we design a variant that removes this component, denoted as MA-CDMP\_\textit{w/o\_MF}, and conduct comparative experiments under different node numbers. As illustrated in Fig.~\ref{scalability}, MA-CDMP consistently achieves higher average rewards than MA-CDMP\_\textit{w/o\_MF} across all settings. These results highlight the importance of the MF communication mechanism, which facilitates more effective cooperation among agents and enhances overall system performance in wireless environments. Furthermore, MA-CDMP maintains stable and competitive results even as the number of agents increases and when evaluated in previously unseen scenarios, confirming its strong scalability and generalization capability.

\begin{figure}[t]
\centerline{\includegraphics[width=\linewidth]{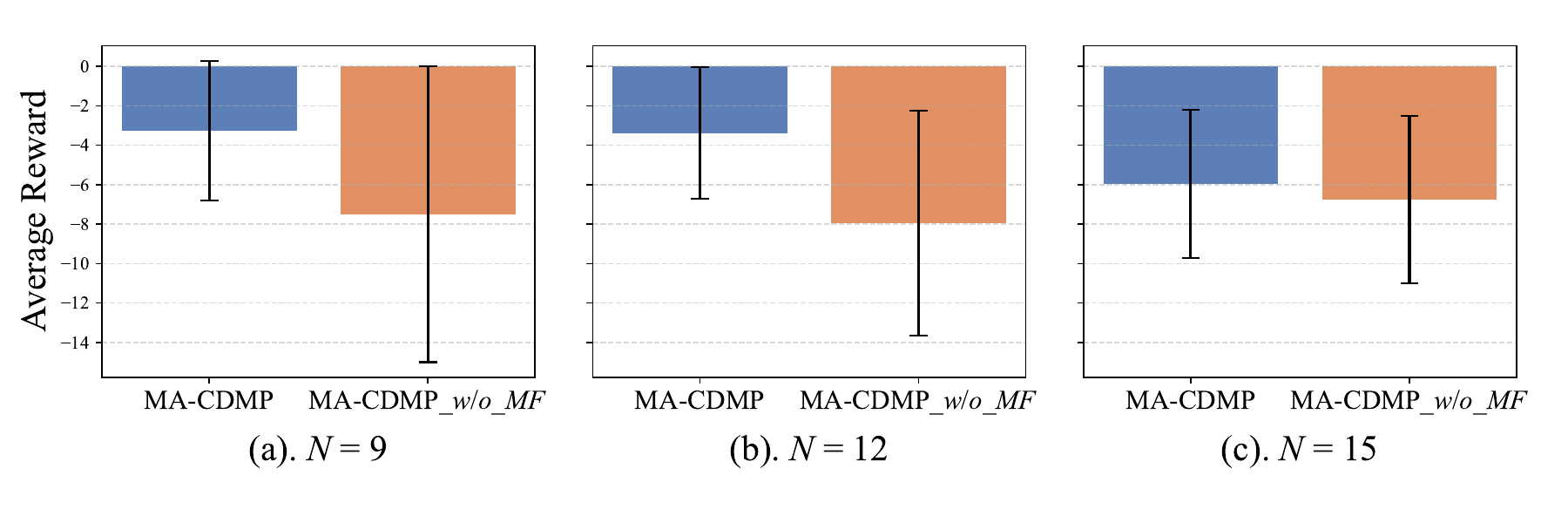}}
\vspace{-0.2cm}
\caption{Comparison of MA-CDMP with MA-CDMP\_\textit{w/o\_MF} under varying node number scenarios in terms of average reward.}
\label{scalability}
\vspace{-0.4cm}
\end{figure}

Next, to investigate the impact of different planning horizons $H$ on the performance of MA-CDMP, we conduct a comparative experiment as shown in Fig.~\ref{horizon}. The results indicate that extending the planning horizon initially improves performance, as a longer horizon enables agents to incorporate more temporal context and make decisions with broader foresight. However, when the horizon becomes excessively long, the performance declines. This degradation arises because the enlarged generative space increases the decision-making complexity and computational burden, thereby reducing the overall generation efficiency. Hence, in practical deployment, a balanced trade-off between computational cost and algorithmic performance should be carefully maintained.

\begin{figure}[t]
\centerline{\includegraphics[width=0.9\linewidth]{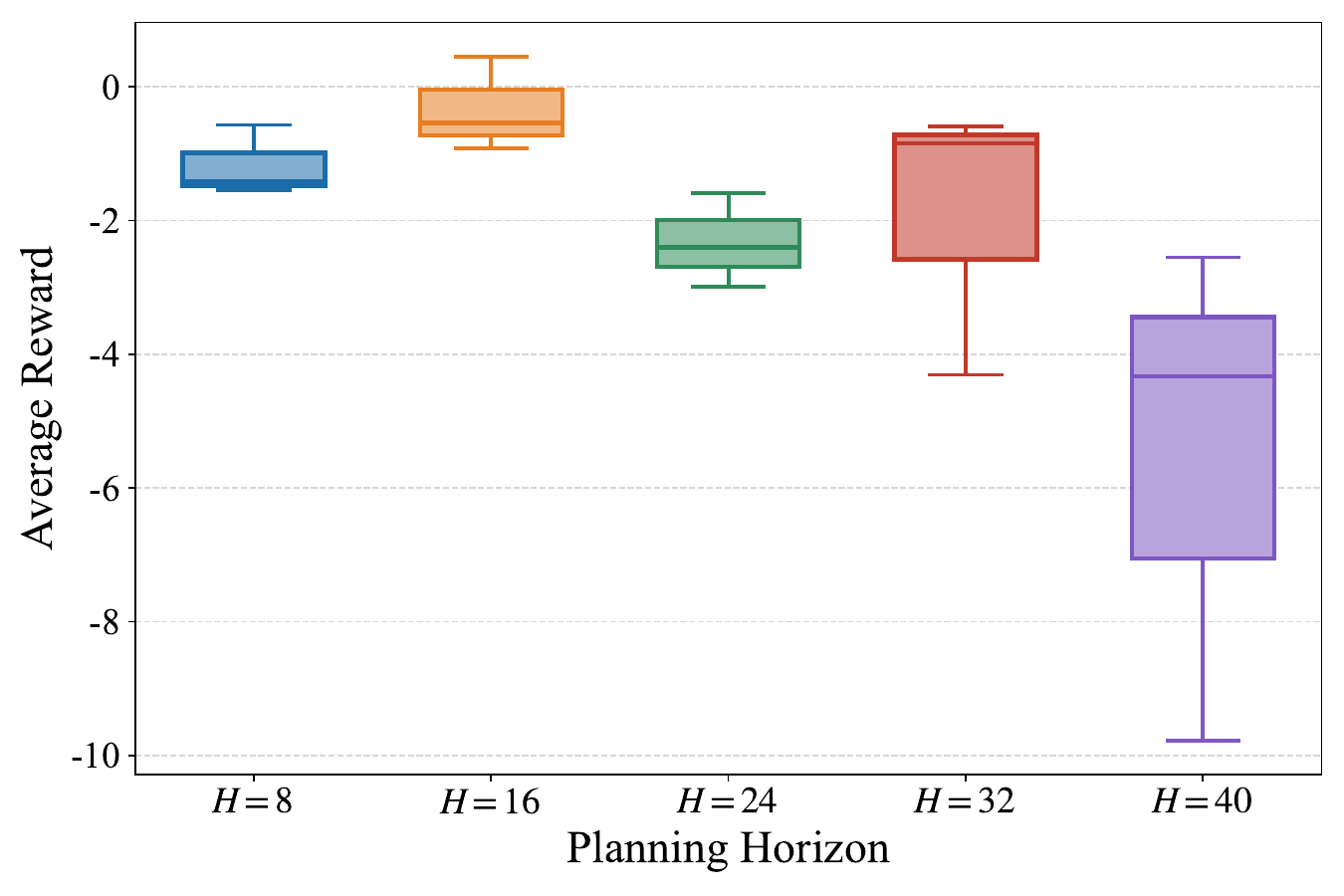}}
\vspace{-0.2cm}
\caption{Comparison of different planning horizons $H$ in terms of average reward.}
\label{horizon}
\vspace{-0.4cm}
\end{figure}

Furthermore, we investigate how the conditional guidance scale affects the conditional generation capability of the MA-CDMP algorithm. As illustrated in Fig.~\ref{scale}, when the value of $\zeta$ increases, the algorithm shows a modest improvement in average reward, yet the overall performance remains relatively consistent. This observation verifies Eq.~\eqref{eq:reverse mean}, suggesting that $\zeta$ serves as an effective control parameter for regulating the model’s ability to generate samples that meet specific conditional constraints. In addition, the stable trend across different $\zeta$ values demonstrates the robustness and adaptability of the model with respect to variations in conditional guidance scalar, ensuring reliable results under varying conditions.

\begin{figure}[t]
\centerline{\includegraphics[width=0.9\linewidth]{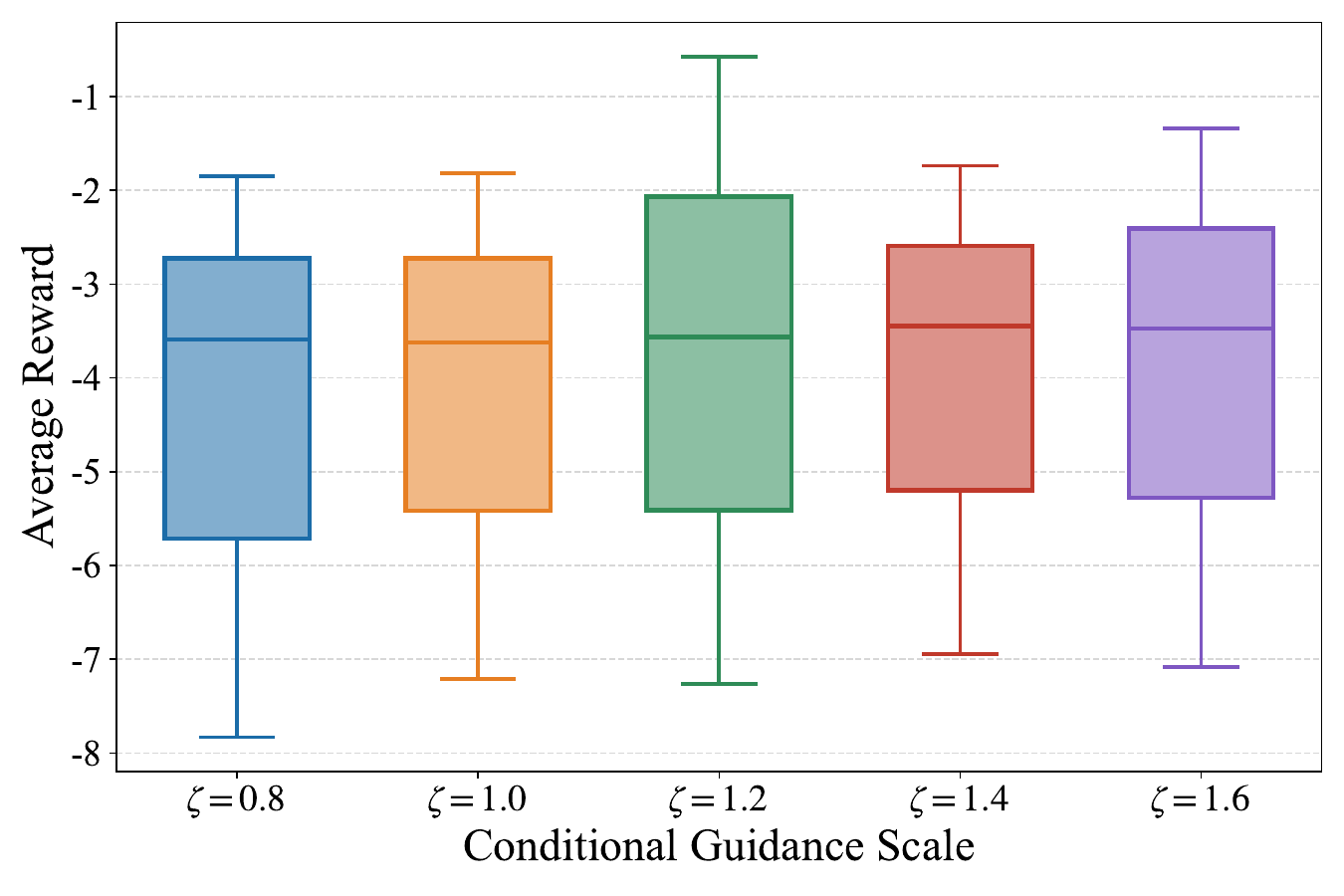}}
\vspace{-0.2cm}
\caption{Comparison of different conditional guidance scales $\zeta$ in terms of average reward.}
\label{scale}
\vspace{-0.4cm}
\end{figure}

We further integrate the DPM-Solver\cite{lu2022dpm} to examine the effect of diffusion step $K$ on the efficiency of decision generation in MA-CDMP. Using a pre-trained noise model with $K=100$ steps, we apply the first-order DPM-Solver to reduce sampling iterations and evaluate its performance.
The results in Fig.~\ref{K} show that although reducing diffusion steps leads to a slight decline in performance, MA-CDMP maintains stable average rewards and achieves performance comparable to baseline methods. Moreover, fewer sampling steps significantly lower the computational and time costs, enhancing overall decision-making efficiency. In practical deployments, $K$ can be flexibly adjusted to balance performance and computational demand according to specific deployment requirements.

\begin{figure}[t]
\centerline{\includegraphics[width=0.9\linewidth]{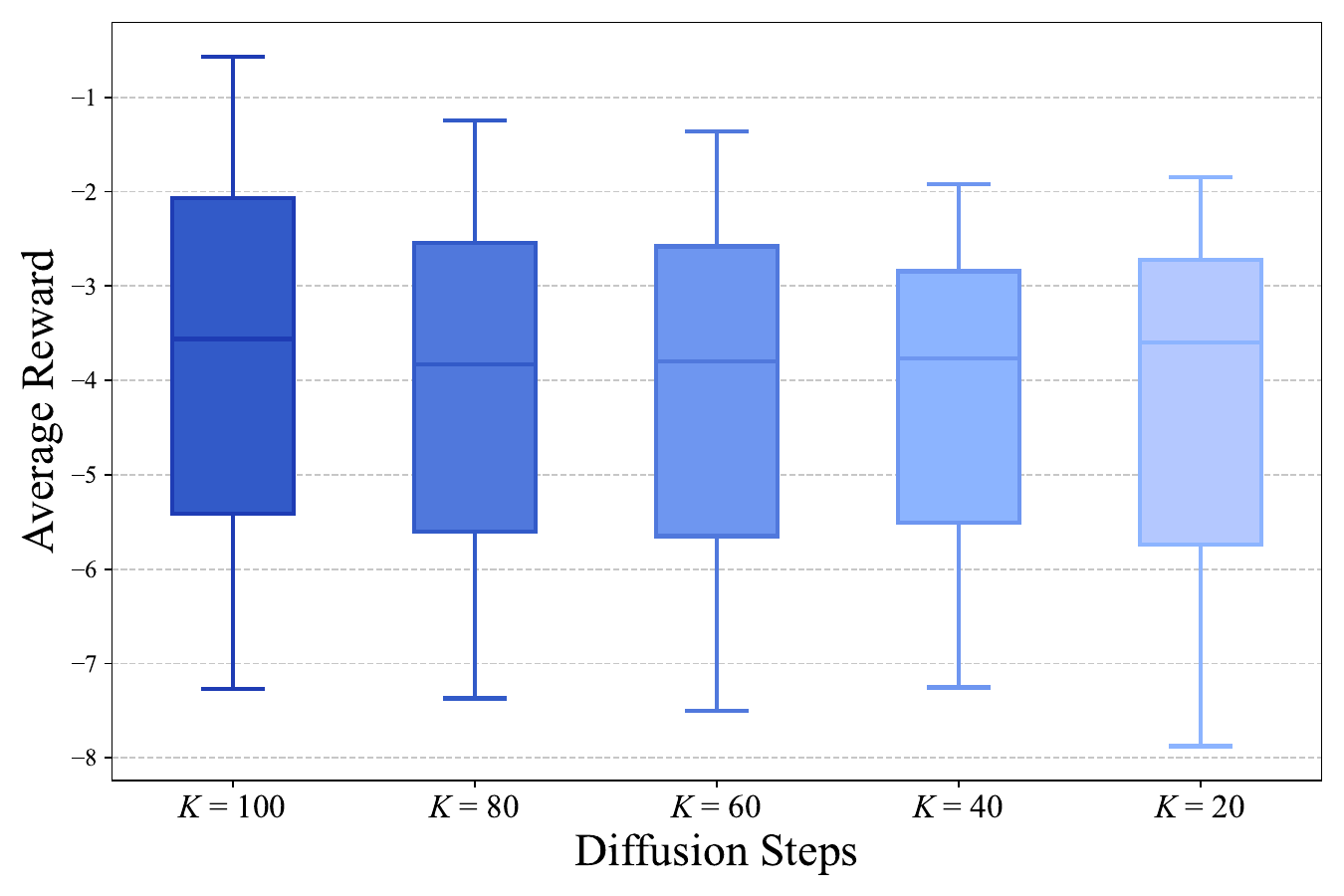}}
\vspace{-0.2cm}
\caption{Comparison of different diffusion steps $K$ in terms of average reward.}
\label{K}
\vspace{-0.4cm}
\end{figure}

\section{Conclusion}\label{VII}
In this paper, we have investigated distributed resource allocation in multi-node, resource-constrained wireless communication networks. Under the MBRL paradigm, we have proposed the MA-CDMP algorithm, which significantly improves communication efficiency and system performance. Specifically, we have employed a conditional diffusion model to predict local observation sequences with high cumulative rewards and utilized an inverse dynamics model to generate corresponding actions. To enable cooperative behavior among distributed nodes and optimize global performance, we have integrated MF communication to enhance the performance of classifer-based world modeling. Moreover, we have theoretically established an upper bound on the discrepancy between the diffusion-generated distribution under MF approximation and the true distribution. Extensive simulations have confirmed the effectiveness, robustness, and scalability of the proposed algorithm.


\appendices
\section{Proof for Lemma \ref{lemma:OU process}}\label{proof:lemma 1}
Based on It\^{o}'s Lemma \cite{ito1951stochastic}, the first-order linear SDE with time-varying coefficients admits the closed-form solution as
\begin{equation}
\label{eq:closed-form}
    p_{T}(\bm{x}_{T}| \bm{x}_{0}) = \mathcal{N}\!\left( e^{-\tfrac{1}{2}\bar{\beta}_T} \bm{x}_{0}, \; \big(1 - e^{-\bar{\beta}_T}\big) \bm{I} \right).
\end{equation}
To bound the divergence between $p_T$ and the standard Gaussian $\rho$, we apply Jensen’s inequality, which yields
\begin{align}\label{eq:Jesen's inequality}
    D_{\text{KL}}(p_{T}\,\|\,\rho) &= D_{\text{KL}}\!\left( \int p_{T}(\bm{x}_{T} | \bm{x}_{0}) \, p_{0}(\bm{x}_{0})\mathrm{d}\bm{x}_{0} \,\Big\|\, \rho \right) \nonumber \\
&\leq \mathbb{E}_{\bm{x}_{0}\sim p_{0}} \!\left[ D_{\text{KL}}\!\left(p_{T}(\bm{x}_{T} | \bm{x}_{0}) \,\|\, \rho \right) \right].
\end{align}
For a $d$-dimensional Gaussian distribution $\mathcal{N}(\bm{\mu}, \bm{\Sigma})$, the KL divergence with respect to $\rho$ has a closed form as 
\begin{equation}
    D_{\text{KL}}\!\left( \mathcal{N}(\bm{\mu}, \!\bm{\Sigma}) \| \rho \right) \!=\! \tfrac{1}{2} \!\left( \!\text{Tr}(\bm{\Sigma})\! -\! d \!-\! \log\!\det(\bm{\Sigma})\! +\! \left\| \bm{\mu}\right\|^{2} \!\right).
\end{equation}
Substituting the mean and covariance from Eq.~\eqref{eq:closed-form}, we obtain
\begin{equation}
\scalebox{0.95}{$
    D_{\text{KL}}\!\left(p_{T}\|\rho \right) \!=\! \tfrac{1}{2}\! \left[ \!-d \!\log\!\left(\!1 \!-\! e^{\!-\bar{\beta}_T} \!\right) \!-\! d e^{\!-\bar{\beta}_T} \!+\! e^{\!-\bar{\beta}_T} \!\left\| \bm{x}_{0} \right\|\!^{2} \right].
$}
\end{equation}
Taking expectation over $\bm{x}_0 \sim p_0$ and using the second-order moment $M_2$, Eq.~\eqref{eq:Jesen's inequality} leads to
\begin{align}
D_{\text{KL}}\!\left(p_{T}\| \rho\right) 
&\scalebox{0.9}{$\leq \mathbb{E}_{p_{0}}\!\left[\tfrac{1}{2}\!\left(\!-d\log(1\!-\!e^{\!-\bar{\beta}_T})\!-\!d e^{\!-\bar{\beta}_T}\!+\!e^{\!-\bar{\beta}_T}\|\bm{x}_{0}\|^{2}\right)\right]$} \nonumber \\
&\!= \tfrac{1}{2}\!\left[\!-d\log(1\!-\!e^{\!-\bar{\beta}_T})\!-\!d e^{\!-\bar{\beta}_T}\!+\!M_{2} e^{\!-\bar{\beta}_T}\right]\!.
\end{align}
Finally, under appropriately designed noise schedule, $e^{-\bar{\beta}_T}$ becomes sufficiently small. Applying the first-order approximation $\log(1 - e^{-\bar{\beta}_T}) \approx -e^{-\bar{\beta}_T}$, we derive
\begin{equation}
    D_{\text{KL}}\!\left(p_{T}\,\|\,\rho\right) 
\lesssim \tfrac{1}{2} M_{2}\, e^{-\bar{\beta}_T}.
\end{equation}
Then the lemma comes.
\hfill $\blacksquare$

\section{Proof for Lemma \ref{lemma:relative Fisher information}} \label{proof:lemma 2}
By the Fokker-Planck equation\cite{risken1989fokker}, the evolution of $p_{\tau}$ and $q_{\tau}$ is given by
\begin{align}
\frac{\partial p_{\tau}}{\partial \tau}(X_{\tau}) \!&=\! \nabla \left[ \!-\!F_1(X_{\tau})p_{\tau}(X_{\tau}) \!+\! \tfrac{g(\tau)^2}{2}\nabla p_{\tau}(X_{\tau}) \right], \\
\frac{\partial q_{\tau}}{\partial \tau}(X_{\tau}) \!&=\! \nabla \left[ \!-\!F_2(X_{\tau})q_{\tau}(X_{\tau}) \!+\! \tfrac{g(\tau)^2}{2}\nabla q_{\tau}(X_{\tau}) \right]. 
\end{align}
Consequently, we can write
\begin{equation}
    \frac{\partial}{\partial \tau} D_{\text{KL}}(p_{\tau}\|q_{\tau})
\!=\! \int \log \frac{p_{\tau}}{q_{\tau}} \frac{\partial p_{\tau}}{\partial \tau}  \mathrm{d}X_{\tau} \!-\! \int \frac{p_{\tau}}{q_{\tau}} \frac{\partial q_{\tau}}{\partial \tau}  \mathrm{d}X_{\tau}.
\end{equation}

For the first term, substituting $\tfrac{\partial p_{\tau}}{\partial \tau}$ and applying integration by parts under the assumption that the boundary term vanishes (i.e., the probability flux at infinity is zero), we obtain
\begin{align}
\int &\log \frac{p_{\tau}}{q_{\tau}} \frac{\partial p_{\tau}}{\partial \tau} \mathrm{d}X_{\tau}
\!=\! \int\!\nabla \Big[ \!-\!F_1p_{\tau}\!+\! \tfrac{g(\tau)^2}{2} \!\nabla p_{\tau} \Big]\! \log \!\frac{p_{\tau}}{q_{\tau}} \mathrm{d}X_{\tau} \nonumber\\
&\!=\! \int\Big\langle \nabla \log \tfrac{p_{\tau}}{q_{\tau}} ,  F_1p_{\tau} - \tfrac{g(\tau)^2}{2} \nabla p_{\tau} \Big\rangle\, \mathrm{d}X_{\tau} \nonumber\\
&\scalebox{0.9}{$=\! \int \!p_{\tau} \Big\langle F_1, \nabla \log \tfrac{p_{\tau}}{q_{\tau}} \Big\rangle \, \mathrm{d}X_{\tau}
 \!-\! \int \tfrac{g(\tau)^2}{2} \!\Big\langle \nabla \log \tfrac{p_{\tau}}{q_{\tau}}, \nabla p_{\tau} \!\Big\rangle \, \mathrm{d}X_{\tau}.$}
\end{align}
Similarly, the second term yields
\begin{align}
\int \frac{p_{\tau}}{q_{\tau}} &\frac{\partial q_{\tau}}{\partial \tau} \, \mathrm{d}X_{\tau}
\!=\! \!\int \frac{p_{\tau}}{q_{\tau}} \nabla \Big[ \!-\!F_2q_{\tau} +\tfrac{g(\tau)^2}{2} \nabla q_{\tau} \Big] \mathrm{d}X_{\tau} \nonumber\\
&\!= \int \Big\langle \nabla \tfrac{p_{\tau}}{q_{\tau}}, F_{2} q_{\tau} - \tfrac{g(\tau)^2}{2} \nabla q_{\tau} \Big\rangle \, \mathrm{d}X_{\tau} \nonumber\\
&\scalebox{0.99}{$= \!\int \!q_{\tau} \Big\langle \nabla \tfrac{p_{\tau}}{q_{\tau}}, F_2 \Big\rangle  \mathrm{d}X_{\tau}
 \!-\! \tfrac{g(\tau)^2}{2} \!\int \!\Big\langle \nabla \tfrac{p_{\tau}}{q_{\tau}}, \nabla q_{\tau} \Big\rangle  \mathrm{d}X_{\tau}.$}
\end{align}
Notice that
\begin{align}
&\scalebox{0.95}{$\int \left\langle \nabla \frac{p_{\tau}}{q_{\tau}}, \nabla q_{\tau} \right\rangle \mathrm{d}X_{\tau} \!-\! \int \left\langle \nabla \log \frac{p_{\tau}}{q_{\tau}}, \nabla p_{\tau} \right\rangle \mathrm{d}X_{\tau} \nonumber $}\\
&\scalebox{0.89}{$=\! \int\! \left\langle \frac{q_{\tau} \nabla p_{\tau} - p_{\tau} \nabla q_{\tau}}{q_{\tau}}, \nabla \log q_{\tau} \right\rangle \mathrm{d}X_{\tau} \!-\! \int p_{\tau} \left\langle \nabla \log \frac{p_{\tau}}{q_{\tau}}, \nabla \log p_{\tau} \right\rangle \mathrm{d}X_{\tau} $}\nonumber \\
&\scalebox{0.9}{$= \int p_{\tau} \left\langle \nabla \log \frac{p_{\tau}}{q_{\tau}}, \nabla \log q_{\tau} \right\rangle \mathrm{d}X_{\tau} 
- \int p_{\tau} \left\langle \nabla \log \frac{p_{\tau}}{q_{\tau}}, \nabla \log p_{\tau} \right\rangle \mathrm{d}X_{\tau} $}\nonumber \\
&\!= - J(p_{\tau}\|q_{\tau}),
\end{align}
and
\begin{align}
&\int p_{\tau} \left\langle F_{1}, \nabla \log \frac{p_{\tau}}{q_{\tau}} \right\rangle  \mathrm{d}X_{\tau} \!-\! \int q_{\tau} 
\left\langle \nabla \frac{p_{\tau}}{q_{\tau}}, F_{2} \right\rangle \mathrm{d}X_{\tau} \nonumber \\
&=\! \!\int\! p_{\tau}\!\left\langle \!F_{1}, \nabla\! \log \frac{p_{\tau}}{q_{\tau}}\! \right\rangle  \mathrm{d}X_{\tau} \!-\! \int\! p_{\tau} \!\left\langle \!\nabla\! \log \!\frac{p_{\tau}}{q_{\tau}}, F_2 \!\right\rangle \mathrm{d}X_{\tau} \nonumber \\
&= \int p_{\tau} \left\langle \nabla \log \frac{p_{\tau}}{q_{\tau}}, F_{1} \!-\! F_{2} \right\rangle \, \mathrm{d}X_{\tau} \nonumber \\
&= \mathbb{E}\!\left[ 
\left\langle F_{1} - F_{2}, \nabla \log \frac{p_{\tau}}{q_{\tau}} \right\rangle 
\right].
\end{align}
Therefore, the lemma follows.
\hfill $\blacksquare$


\begin{thebibliography}{10}
\providecommand{\url}[1]{#1}
\csname url@samestyle\endcsname
\providecommand{\newblock}{\relax}
\providecommand{\bibinfo}[2]{#2}
\providecommand{\BIBentrySTDinterwordspacing}{\spaceskip=0pt\relax}
\providecommand{\BIBentryALTinterwordstretchfactor}{4}
\providecommand{\BIBentryALTinterwordspacing}{\spaceskip=\fontdimen2\font plus
\BIBentryALTinterwordstretchfactor\fontdimen3\font minus \fontdimen4\font\relax}
\providecommand{\BIBforeignlanguage}[2]{{%
\expandafter\ifx\csname l@#1\endcsname\relax
\typeout{** WARNING: IEEEtran.bst: No hyphenation pattern has been}%
\typeout{** loaded for the language `#1'. Using the pattern for}%
\typeout{** the default language instead.}%
\else
\language=\csname l@#1\endcsname
\fi
#2}}
\providecommand{\BIBdecl}{\relax}
\BIBdecl

\bibitem{alsabah20216g}
M.~Alsabah\emph{,~et~al.}, ``6g wireless communications networks: A comprehensive survey,'' \emph{IEEE Access}, vol.~9, pp. 148\,191--148\,243, 2021.

\bibitem{kadhim2023enhancement}
J.~Q. Kadhim\emph{,~et~al.}, ``Enhancement of online education in engineering college based on mobile wireless communication networks and iot,'' \emph{Int. J. Emerg. Technol. Learn. (Online)}, vol.~18, no.~1, p. 176, 2023.

\bibitem{aboagye2024multi}
S.~Aboagye\emph{,~et~al.}, ``Multi-band wireless communication networks: Fundamentals, challenges, and resource allocation,'' \emph{IEEE Trans. Commun.}, vol.~72, no.~7, pp. 4333--4383, 2024.

\bibitem{7148429}
B.~Suman\emph{,~et~al.}, ``A dynamic tdma slot scheduling(dtss) scheme for efficient channel allocation in tactical ad hoc networks,'' in \emph{Proc. IEEE Int. Conf. Comput. Commun. Automat. (ICCCA)}, Noida, India, May 2015, pp. 815--838.

\bibitem{jabandzic2021dynamic}
I.~Jabandžić\emph{,~et~al.}, ``A dynamic distributed multi-channel tdma slot management protocol for ad hoc networks,'' \emph{IEEE Access}, vol.~9, pp. 61\,864--61\,886, Apr 2021.

\bibitem{wi2020delay}
G.~Wi\emph{,~et~al.}, ``Delay-aware tdma scheduling with deep reinforcement learning in tactical manet,'' in \emph{Int. Conf. Inf. Commun. Tech. Conv. (ICTC)}, Jeju, Korea, Oct 2020, pp. 370--372.

\bibitem{Chilukuri2021Deadline}
S.~Chilukuri\emph{,~et~al.}, ``Deadline-aware tdma scheduling for multihop networks using reinforcement learning,'' in \emph{IFIP Net. Conf.}, Espoo and Helsinki, Finland, Jun 2021, pp. 1--9.

\bibitem{moerland2023model}
T.~M. Moerland\emph{,~et~al.}, ``Model-based reinforcement learning: A survey,'' \emph{Found. Trends Mach. Learn.}, vol.~16, no.~1, pp. 1--118, 2023.

\bibitem{huang2020model}
Q.~Huang, ``Model-based or model-free, a review of approaches in reinforcement learning,'' in \emph{Proc. Int. Conf. Comput. Data Sci. (ICCDS)}, Stanford, CA, USA, Aug 2020, pp. 219--221.

\bibitem{you2019advanced}
C.~You\emph{,~et~al.}, ``Advanced planning for autonomous vehicles using reinforcement learning and deep inverse reinforcement learning,'' \emph{Rob. Auton. Syst.}, vol. 114, pp. 1--18, Apr 2019.

\bibitem{chua2018deep}
K.~Chua\emph{,~et~al.}, ``Deep reinforcement learning in a handful of trials using probabilistic dynamics models,'' in \emph{Proc. Adv. Neural Inf. Proces. Syst. (NeurIPS)}, Montréal, Canada, Dec 2018, pp. 4759--4770.

\bibitem{kingma2013auto}
D.~P. Kingma\emph{,~et~al.}, ``Auto-encoding variational bayes,'' in \emph{Proc. Int. Conf. Learn. Represent. (ICLR)}, Scottsdale, Arizona, USA, May 2013.

\bibitem{luo2023reward}
F.-M. Luo\emph{,~et~al.}, ``Reward-consistent dynamics models are strongly generalizable for offline reinforcement learning,'' \emph{arXiv preprint arXiv:2310.05422}, 2023.

\bibitem{goodfellow2014generative}
I.~Goodfellow\emph{,~et~al.}, ``Generative adversarial nets,'' in \emph{Proc. Adv. Neural Inf. Proces. Syst. (NeurIPS)}, Montréal, Canada, Dec 2014, pp. 73--76.

\bibitem{wu_highfidelity_2025}
N.~Wu\emph{,~et~al.}, ``High-fidelity data-driven dynamics model for reinforcement learning-based control in {{HL-3}} tokamak,'' \emph{Commun. Phys.}, vol. 2025, no.~8, p. 393, Oct. 2025.

\bibitem{du2019implicit}
Y.~Du\emph{,~et~al.}, ``Implicit generation and modeling with energy based models,'' in \emph{Proc. Adv. Neural Inf. Proces. Syst. (NeurIPS)}, Vancouver, Canada, Dec 2019, pp. 3608--3618.

\bibitem{brown2020language}
T.~Brown\emph{,~et~al.}, ``Language models are few-shot learners,'' in \emph{Proc. Adv. Neural Inf. Proces. Syst. (NeurIPS)}, Virtual Edition, Dec 2020, pp. 1877--1901.

\bibitem{rombach2022high}
R.~Rombach\emph{,~et~al.}, ``High-resolution image synthesis with latent diffusion models,'' in \emph{Proc. IEEE Conf. Comput. Vis. Pattern Recognit. (CVPR)}, New Orleans, LA, USA, Jun 2022, pp. 10\,674--10\,685.

\bibitem{li2023beyond}
Z.~Li\emph{,~et~al.}, ``Beyond conservatism: Diffusion policies in offline multi-agent reinforcement learning,'' \emph{arXiv preprint arXiv:2307.01472}, 2023.

\bibitem{10736570}
Z.~Liu\emph{,~et~al.}, ``Dnn partitioning, task offloading, and resource allocation in dynamic vehicular networks: A lyapunov-guided diffusion-based reinforcement learning approach,'' \emph{IEEE Trans. Mob. Comput.}, pp. 1--17, Oct 2024.

\bibitem{meng_conditional_2025a}
K.~Meng\emph{,~et~al.}, ``Conditional diffusion model with {{OOD}} mitigation as high-dimensional offline resource allocation planner in clustered ad hoc networks,'' \emph{IEEE Trans. Commun.}, 2025, early access.

\bibitem{sohaib2021dynamic}
M.~Sohaib\emph{,~et~al.}, ``Dynamic multichannel access via multi-agent reinforcement learning: Throughput and fairness guarantees,'' \emph{IEEE Trans. Wireless Commun.}, vol.~21, no.~6, pp. 3994--4008, 2021.

\bibitem{miuccio2024learning}
L.~Miuccio\emph{,~et~al.}, ``On learning generalized wireless mac communication protocols via a feasible multi-agent reinforcement learning framework,'' \emph{IEEE Trans. Mach. Learn. Commun. Networking}, vol.~2, pp. 298--317, 2024.

\bibitem{hwang2022decentralized}
S.~Hwang\emph{,~et~al.}, ``Decentralized computation offloading with cooperative uavs: Multi-agent deep reinforcement learning perspective,'' \emph{IEEE Wireless Commun.}, vol.~29, no.~4, pp. 24--31, 2022.

\bibitem{xu2023distributed}
K.~Xu\emph{,~et~al.}, ``Distributed-training-and-execution multi-agent reinforcement learning for power control in hetnet,'' \emph{IEEE Trans. Commun.}, vol.~71, no.~10, pp. 5893--5903, Oct 2023.

\bibitem{chen2022sample}
Z.~Chen\emph{,~et~al.}, ``Sample and communication-efficient decentralized actor-critic algorithms with finite-time analysis,'' in \emph{Proc. Int. Conf. Mach. Learn. (ICML)}, Baltimore, Maryland, USA, Jul 2022, pp. 3794--3834.

\bibitem{li2024multi}
K.~Li\emph{,~et~al.}, ``Multi-agent reinforcement learning with decentralized distribution correction,'' \emph{IEEE Trans. Autom. Sci. Eng.}, vol.~22, pp. 1684--1696, Feb 2024.

\bibitem{qu2022scalable}
G.~Qu\emph{,~et~al.}, ``Scalable reinforcement learning for multiagent networked systems,'' \emph{Oper. Res.}, vol.~70, no.~6, pp. 3601--3628, 2022.

\bibitem{qu2019value}
C.~Qu\emph{,~et~al.}, ``Value propagation for decentralized networked deep multi-agent reinforcement learning,'' in \emph{Proc. Adv. Neural Inf. Proces. Syst. (NeurIPS)}, Vancouver, Canada, Dec 2019.

\bibitem{blumenkamp2021emergence}
J.~Blumenkamp\emph{,~et~al.}, ``The emergence of adversarial communication in multi-agent reinforcement learning,'' in \emph{Proc. Conf. Robot Learn. (CoRL)}, London, UK, Nov 2021.

\bibitem{nayak2023scalable}
S.~Nayak\emph{,~et~al.}, ``Scalable multi-agent reinforcement learning through intelligent information aggregation,'' in \emph{Proc. Int. Conf. Mach. Learn. (ICML)}, Honolulu, Hawaii, USA, Jul 2023, pp. 25\,817--25\,833.

\bibitem{yang2018mean}
Y.~Yang\emph{,~et~al.}, ``Mean field multi-agent reinforcement learning,'' in \emph{Proc. Int. Conf. Mach. Learn. (ICML)}, Stockholm, Sweden, Jul 2018, pp. 5571--5580.

\bibitem{yuan2025integrating}
W.~Yuan\emph{,~et~al.}, ``Integrating mean-field game theory with diffusion model,'' in \emph{Proc. IEEE SPAWC 2025}, Surrey, UK, Jul 2025.

\bibitem{sun2017relative}
K.~Sun\emph{,~et~al.}, ``Relative fisher information and natural gradient for learning large modular models,'' in \emph{Proc. Int. Conf. Mach. Learn. (ICML)}, Sydney, Australia, Aug 2017, pp. 3289--3298.

\bibitem{9866568}
S.~Zhang\emph{,~et~al.}, ``Load-aware distributed resource allocation for mf-tdma ad hoc networks: A multi-agent drl approach,'' \emph{IEEE Trans. Network Sci. Eng.}, vol.~9, no.~6, pp. 4426--4443, Aug 2022.

\bibitem{9681886}
Z.~Guo\emph{,~et~al.}, ``Multi-agent reinforcement learning-based distributed channel access for next generation wireless networks,'' \emph{IEEE J. Sel. Areas Commun.}, vol.~40, no.~5, pp. 1587--1599, 2022.

\bibitem{rashid2018qmix}
T.~Rashid\emph{,~et~al.}, ``Monotonic value function factorisation for deep multi-agent reinforcement learning,'' \emph{J. Mach. Learn. Res.}, vol.~21, no.~1, pp. 7234--7284, Jan 2020.

\bibitem{10547350}
A.~Kopic\emph{,~et~al.}, ``A collaborative multi-agent deep reinforcement learning-based wireless power allocation with centralized training and decentralized execution,'' \emph{IEEE Trans. Commun.}, vol.~72, no.~11, pp. 7006--7016, 2024.

\bibitem{watson2023protein}
J.~Watson\emph{,~et~al.}, ``De novo design of protein structure and function with rfdiffusion,'' \emph{Nature}, vol. 620, 07 2023.

\bibitem{janner2022planning}
M.~Janner\emph{,~et~al.}, ``Planning with diffusion for flexible behavior synthesis,'' in \emph{Proc. Int. Conf. Mach. Learn. (ICML)}, Baltimore, Maryland, USA, Jul 2022, pp. 9902--9915.

\bibitem{ajay2022conditional}
A.~Ajay\emph{,~et~al.}, ``Is conditional generative modeling all you need for decision-making?'' in \emph{Proc. Int. Conf. Learn. Represent. (ICLR)}, Kigali, Rwanda, May 2023, pp. 29\,598--29\,620.

\bibitem{10515203}
Y.~Liu\emph{,~et~al.}, ``Deep generative model and its applications in efficient wireless network management: A tutorial and case study,'' \emph{IEEE Wireless Commun.}, vol.~31, no.~4, pp. 199--207, Aug 2024.

\bibitem{10753523}
A.~Babazadeh~Darabi\emph{,~et~al.}, ``Diffusion model based resource allocation strategy in ultra-reliable wireless networked control systems,'' \emph{IEEE Commun. Lett.}, vol.~29, no.~1, pp. 85--89, Jan 2025.

\bibitem{liu2024generative}
T.~Liu\emph{,~et~al.}, ``Generative diffusion model (gdm) for optimization of wi-fi networks,'' \emph{arXiv preprint arXiv:2404.15684}, 2024.

\bibitem{lillicrap2015continuous}
T.~P. Lillicrap\emph{,~et~al.}, ``Continuous control with deep reinforcement learning,'' in \emph{Proc. Int. Conf. Learn. Represent. (ICLR)}, Caribe Hilton, San Juan, Puerto Rico, May 2016.

\bibitem{10838290}
P.~Ning\emph{,~et~al.}, ``Diffusion-based deep reinforcement learning for resource management in connected construction equipment networks: A hierarchical framework,'' \emph{IEEE Trans. Wireless Commun.}, vol.~24, no.~4, pp. 2847--2861, Jan 2025.

\bibitem{haarnoja2018soft}
T.~Haarnoja\emph{,~et~al.}, ``Soft actor-critic: Off-policy maximum entropy deep reinforcement learning with a stochastic actor,'' in \emph{Proc. Int. Conf. Mach. Learn. (ICML)}, Stockholm, Sweden, Jul 2018, pp. 1861--1870.

\bibitem{ramirez2022model}
J.~Ram{\'\i}rez\emph{,~et~al.}, ``Model-free reinforcement learning from expert demonstrations: a survey,'' \emph{Artif. Intell. Rev.}, vol.~55, no.~4, pp. 3213--3241, 2022.

\bibitem{chendeep}
J.~Chen\emph{,~et~al.}, ``Deep generative models for offline policy learning: Tutorial, survey, and perspectives on future directions,'' \emph{Trans. Mach. Learn. Res.}, 2024.

\bibitem{li2025dof}
C.~Li\emph{,~et~al.}, ``Dof: A diffusion factorization framework for offline multi-agent reinforcement learning,'' in \emph{Proc. Int. Conf. Learn. Represent. (ICLR)}, Singapore, Apr 2025.

\bibitem{zhu2024madiff}
Z.~Zhu\emph{,~et~al.}, ``Madiff: Offline multi-agent learning with diffusion models,'' in \emph{Proc. Adv. Neural Inf. Proces. Syst. (NeurIPS)}, Vancouver, Canada, Dec 2024, pp. 4177--4206.

\bibitem{geng2023diffusion}
J.~Geng\emph{,~et~al.}, ``Diffusion policies as multi-agent reinforcement learning strategies,'' in \emph{Proc. Int. Conf. Artif. Neural Netw.}, Heraklion, Crete, Greece, Sep 2023, pp. 356--364.

\bibitem{kumar2020conservative}
A.~Kumar\emph{,~et~al.}, ``Conservative q-learning for offline reinforcement learning,'' in \emph{Proc. Adv. Neural Inf. Proces. Syst. (NeurIPS)}, Virtual Edition, Dec 2020, pp. 1179--1191.

\bibitem{wangdiffusion}
Z.~Wang\emph{,~et~al.}, ``Diffusion policies as an expressive policy class for offline reinforcement learning,'' in \emph{Proc. Int. Conf. Learn. Represent. (ICLR)}, Kigali, Rwanda, May 2023, pp. 26\,790--26\,807.

\bibitem{lee2023convergence}
H.~Lee\emph{,~et~al.}, ``Convergence of score-based generative modeling for general data distributions,'' in \emph{Proc. Int. Conf. Algorithmic Learn. Theory (ALT)}, Singapore, Feb 2023.

\bibitem{chen2023sampling}
S.~Chen\emph{,~et~al.}, ``Sampling is as easy as learning the score: theory for diffusion models with minimal data assumptions,'' in \emph{Proc. Int. Conf. Learn. Represent. (ICLR)}, Kigali, Rwanda, May 2023, pp. 30\,868--30\,896.

\bibitem{le2016brownian}
J.-F. Le~Gall, \emph{Brownian motion, martingales, and stochastic calculus}.\hskip 1em plus 0.5em minus 0.4em\relax Springer, 2016.

\bibitem{chen2023improved}
H.~Chen\emph{,~et~al.}, ``Improved analysis of score-based generative modeling: User-friendly bounds under minimal smoothness assumptions,'' in \emph{Proc. Int. Conf. Mach. Learn. (ICML)}, Honolulu, Hawaii, USA, Jul 2023, pp. 4735--4763.

\bibitem{stummer1993novikov}
W.~Stummer, ``The novikov and entropy conditions of multidimensional diffusion processes with singular drift,'' \emph{Probab. Theory Relat. Fields}, vol.~97, no.~4, pp. 515--542, 1993.

\bibitem{oliehoek2016concise}
F.~A. Oliehoek\emph{,~et~al.}, \emph{A concise introduction to decentralized POMDPs}.\hskip 1em plus 0.5em minus 0.4em\relax Springer, 2016, vol.~1.

\bibitem{janner2019trust}
M.~Janner\emph{,~et~al.}, ``When to trust your model: Model-based policy optimization,'' in \emph{Proc. Adv. Neural Inf. Proces. Syst. (NeurIPS)}, Vancouver, Canada, Dec 2019, pp. 12\,519--12\,530.

\bibitem{song2021score}
Y.~Song\emph{,~et~al.}, ``Score-based generative modeling through stochastic differential equations,'' in \emph{Proc. Int. Conf. Learn. Represent. (ICLR)}, Virtual Edition, May 2021, pp. 240--276.

\bibitem{ho2020denoising}
J.~Ho\emph{,~et~al.}, ``Denoising diffusion probabilistic models,'' in \emph{Proc. Adv. Neural Inf. Proces. Syst. (NeurIPS)}, Virtual Edition, Dec 2020, pp. 6840--6851.

\bibitem{anderson1982reverse}
B.~D. Anderson, ``Reverse-time diffusion equation models,'' \emph{Stochastic Processes Appl.}, vol.~12, no.~3, pp. 313--326, 1982.

\bibitem{ho2022classifier}
J.~Ho\emph{,~et~al.}, ``Classifier-free diffusion guidance,'' \emph{arXiv preprint arXiv:2207.12598}, 2022.

\bibitem{dhariwal2021diffusion}
P.~Dhariwal\emph{,~et~al.}, ``Diffusion models beat gans on image synthesis,'' in \emph{Proc. Adv. Neural Inf. Proces. Syst. (NeurIPS)}, Virtual Edition, Dec 2021, pp. 8780--8794.

\bibitem{Harald1946Note}
H.~T. Friis, ``A note on a simple transmission formula,'' \emph{Proc. IRE}, vol.~34, no.~5, pp. 254--256, May 1946.

\bibitem{stanley1971phase}
H.~E. Stanley, \emph{Phase transitions and critical phenomena}.\hskip 1em plus 0.5em minus 0.4em\relax Clarendon Press, Oxford, 1971.

\bibitem{allen2021learning}
C.~Allen\emph{,~et~al.}, ``Learning markov state abstractions for deep reinforcement learning,'' in \emph{Proc. Adv. Neural Inf. Proces. Syst. (NeurIPS)}, Virtual Edition, Dec 2021, pp. 8229--8241.

\bibitem{hao2023gat}
Q.~Hao\emph{,~et~al.}, ``Gat-mf: Graph attention mean field for very large scale multi-agent reinforcement learning,'' in \emph{Proc. ACM SIGKDD Conf. Knowl. Discov. Data Min.}, Long Beach, California, USA, Aug 2023, pp. 685--697.

\bibitem{ronneberger2015u}
O.~Ronneberger\emph{,~et~al.}, ``U-net: Convolutional networks for biomedical image segmentation,'' in \emph{Proc. Med. Image Comput. Comput. Assist. Interv. (MICCAI)}, Munich, Germany, Oct 2015.

\bibitem{pan2022plan}
L.~Pan\emph{,~et~al.}, ``Plan better amid conservatism: Offline multi-agent reinforcement learning with actor rectification,'' in \emph{Proc. Int. Conf. Mach. Learn. (ICML)}, Baltimore, Maryland, USA, Jul 2022, pp. 17\,221--17\,237.

\bibitem{fujimoto2021minimalist}
S.~Fujimoto\emph{,~et~al.}, ``A minimalist approach to offline reinforcement learning,'' in \emph{Proc. Adv. Neural Inf. Proces. Syst. (NeurIPS)}, vol.~34, Virtual Edition, Dec 2021, pp. 20\,132--20\,145.

\bibitem{chen2021decision}
L.~Chen\emph{,~et~al.}, ``Decision transformer: Reinforcement learning via sequence modeling,'' in \emph{Proc. Adv. Neural Inf. Proces. Syst. (NeurIPS)}, Virtual Edition, Dec 2021.

\bibitem{lu2022dpm}
C.~Lu\emph{,~et~al.}, ``Dpm-solver: A fast ode solver for diffusion probabilistic model sampling in around 10 steps,'' in \emph{Proc. Adv. Neural Inf. Proces. Syst. (NeurIPS)}, New Orleans, Louisiana, USA, Nov 2022, pp. 5775--5787.

\bibitem{ito1951stochastic}
K.~Ito\emph{,~et~al.}, \emph{On stochastic differential equations}.\hskip 1em plus 0.5em minus 0.4em\relax American Mathematical Society New York, 1951, vol.~4.

\bibitem{risken1989fokker}
H.~Risken, ``Fokker-planck equation,'' in \emph{The Fokker-Planck equation: methods of solution and applications}.\hskip 1em plus 0.5em minus 0.4em\relax Springer, 1989, pp. 63--95.

\end{thebibliography}

 



\end{document}